\newcommand\vldbdoi{XX.XX/XXX.XX}
\newcommand\vldbpages{XXX-XXX}
\newcommand\vldbvolume{14}
\newcommand\vldbissue{1}
\newcommand\vldbyear{2020}
\newcommand\vldbauthors{\authors}
\newcommand\vldbtitle{\shorttitle} 
\newcommand\vldbavailabilityurl{URL_TO_YOUR_ARTIFACTS}
\newcommand\vldbpagestyle{plain}
\newtheorem{theorem}{Theorem}
\newtheorem{definition}{Definition}
\newtheorem{lemma}{Lemma}
\def\eg {\emph{e.g}.} 
\def\ie {\emph{i.e}.}
\def\etal{\emph{et al}.}
\begin{document}

% ****************** TITLE ****************************************

\title{Fast Approximate Nearest Neighbor Search With The Navigating Spreading-out Graph}

% possible, but not really needed or used for PVLDB:
%\subtitle{[Extended Abstract]
%\titlenote{A full version of this paper is available as\textit{Author's Guide to Preparing ACM SIG Proceedings Using \LaTeX$2_\epsilon$\ and BibTeX} at \texttt{www.acm.org/eaddress.htm}}}

% ****************** AUTHORS **************************************

% You need the command \numberofauthors to handle the 'placement
% and alignment' of the authors beneath the title.
%
% For aesthetic reasons, we recommend 'three authors at a time'
% i.e. three 'name/affiliation blocks' be placed beneath the title.
%
% NOTE: You are NOT restricted in how many 'rows' of
% "name/affiliations" may appear. We just ask that you restrict
% the number of 'columns' to three.
%
% Because of the available 'opening page real-estate'
% we ask you to refrain from putting more than six authors
% (two rows with three columns) beneath the article title.
% More than six makes the first-page appear very cluttered indeed.
%
% Use the \alignauthor commands to handle the names
% and affiliations for an 'aesthetic maximum' of six authors.
% Add names, affiliations, addresses for
% the seventh etc. author(s) as the argument for the
% \additionalauthors command.
% These 'additional authors' will be output/set for you
% without further effort on your part as the last section in
% the body of your article BEFORE References or any Appendices.

\author{Cong Fu}
\affiliation{%
  \institution{Zhejiang University}
  \city{Hangzhou}
  \country{China}
}
\email{fc731097343@gmail.com}

\author{Chao Xiang}
\affiliation{%
  \institution{Zhejiang University}
  \city{Hangzhou}
  \country{China}
}
\email{chaoxiang@zju.edu.cn}

\author{Changxu Wang}
\affiliation{%
  \institution{Zhejiang University}
  \city{Hangzhou}
  \country{China}
}
\email{changxu.mail@gmail.com}

\author{Deng Cai}
\affiliation{%
  \institution{Zhejiang University}
  \city{Hangzhou}
  \country{China}
}
\email{dengcai@gmail.com}

% There's nothing stopping you putting the seventh, eighth, etc.
% author on the opening page (as the 'third row') but we ask,
% for aesthetic reasons that you place these 'additional authors'
% in the \additional authors block, viz.
%\additionalauthors{Additional authors: John Smith (The Th{\o}rv\"{a}ld Group, {\texttt{jsmith@affiliation.org}}), Julius P.~Kumquat
%(The \raggedright{Kumquat} Consortium, {\small \texttt{jpkumquat@consortium.net}}), and Ahmet Sacan (Drexel University, {\small \texttt{ahmetdevel@gmail.com}})}
%\date{30 July 1999}
% Just remember to make sure that the TOTAL number of authors
% is the number that will appear on the first page PLUS the
% number that will appear in the \additionalauthors section.

\begin{abstract}
Approximate nearest neighbor search (ANNS) is a fundamental problem in databases and data mining. A scalable ANNS algorithm should be both memory efficient and fast. Some early graph-based approaches have shown attractive theoretical guarantees on search time complexity, but they all suffer from the problem of high indexing time complexity. Recently, some graph-based methods have been proposed to reduce indexing complexity by approximating the traditional graphs; these methods have achieved revolutionary performance on million-scale datasets. Yet, they still can not scale to billion-node databases. In this paper, to further improve the search-efficiency and scalability of graph-based methods, we start by introducing four aspects: (1) ensuring the connectivity of the graph;  (2) lowering the average out-degree of the graph for fast traversal;  (3) shortening the search path; and (4) reducing the index size. Then, we propose a novel graph structure called Monotonic Relative Neighborhood Graph (MRNG) which guarantees very low search complexity (close to logarithmic time). To further lower the indexing complexity and make it practical for billion-node ANNS problems, we propose a novel graph structure named Navigating Spreading-out Graph (NSG) by approximating the MRNG. The NSG takes the four aspects into account simultaneously. Extensive experiments show that NSG outperforms all the existing algorithms significantly. In addition, NSG shows superior performance in the E-commercial search scenario of Taobao (Alibaba Group) and has been integrated into their search engine at billion-node scale.
\end{abstract}

\maketitle

%%% do not modify the following VLDB block %%
%%% VLDB block start %%%
\pagestyle{\vldbpagestyle}
\begingroup\small\noindent\raggedright\textbf{PVLDB Reference Format:}\\
\vldbauthors. \vldbtitle. PVLDB, \vldbvolume(\vldbissue): \vldbpages, \vldbyear.\\
\href{https://doi.org/\vldbdoi}{doi:\vldbdoi}
\endgroup
\begingroup
\renewcommand\thefootnote{}\footnote{\noindent
This work is licensed under the Creative Commons BY-NC-ND 4.0 International License. Visit \url{https://creativecommons.org/licenses/by-nc-nd/4.0/} to view a copy of this license. For any use beyond those covered by this license, obtain permission by emailing \href{mailto:info@vldb.org}{info@vldb.org}. Copyright is held by the owner/author(s). Publication rights licensed to the VLDB Endowment. \\
\raggedright Proceedings of the VLDB Endowment, Vol. \vldbvolume, No. \vldbissue\ %
ISSN 2150-8097. \\
\href{https://doi.org/\vldbdoi}{doi:\vldbdoi} \\
}\addtocounter{footnote}{-1}\endgroup
%%% VLDB block end %%%

%%% do not modify the following VLDB block %%
%%% VLDB block start %%%
\ifdefempty{\vldbavailabilityurl}{}{
\vspace{.3cm}
\begingroup\small\noindent\raggedright\textbf{PVLDB Artifact Availability:}\\
The source code, data, and/or other artifacts have been made available at \url{\vldbavailabilityurl}.
\endgroup
}
%%% VLDB block end %%%

\section{Introduction}
Approximate nearest neighbor search (ANNS) has been a hot topic over decades and provides fundamental support for many applications in data mining, databases, and information retrieval~\cite{HuangFZFN15,AroraSK018,teodoro2014approximate,zheng2016lazylsh,de2002efficient,chen2005robust}. For sparse discrete data (like documents), the nearest neighbor search can be carried out efficiently on advanced index structures (e.g., inverted index \cite{Manning2008Introduction}). For dense continuous vectors, various solutions have been proposed such as tree-structure based approaches \cite{Bentley1975Multidimensional, Silpaanan2008Optimised, Fukunaga1975A, jagadish2005idistance, beckmann1990r, AroraSK018}, hashing-based approaches \cite{Gionis1999Similarity, Weiss2008Spectral, gao2014dsh, liu2014sk, HuangFZFN15}, quantization-based approaches \cite{jegou2011product, weber1998quantitative, ge2013optimized, andre2015cache}, and graph-based approaches \cite{Hajebi2011Fast, arya1993approximate, wu2014fast, malkov2014approximate}. Among them, graph-based methods have shown great potential recently. There are some experimental results showing that the graph-based methods perform much better than some popular algorithms from other types in the commonly used Euclidean Space \cite{Jin2014Fast,malkov2014approximate,MalkovYHNSW16,Ben2016Fanng,AroraSK018,CongEfanna2016}. The reason may be that these methods cannot express the neighbor relationship as well as the graph-based methods and they tend to check much more points in neighbor-subspaces than the graph-based methods to reach the same accuracy\cite{weber1998quantitative}. Thus, their search time complexity involves large factors exponential in the dimension and leads to inferior performance \cite{har2012approximate}.

Nearest neighbor search via graphs has been studied for decades \cite{jaromczyk1992relative, dearholt1988monotonic, arya1993approximate}. Given a set of points $S$ in the $d$-dimensional Euclidean space $E^d$, a graph $G$ is defined as a set of edges connecting these points (nodes). The edge $pq$ defines a neighbor-relationship between node $p$ and $q$. Various constraints are proposed on the edges to make the graphs suitable for ANNS problem. These graphs are now referred to as the \textit{Proximity Graphs} \cite{jaromczyk1992relative}. Some proximity graphs like Delaunay Graphs (or Delaunay Triangulation) \cite{aurenhammer1991voronoi} and Monotonic Search Networks (MSNET) \cite{dearholt1988monotonic} ensure that from any node $p$ to another node $q$, there exists a path on which the intermediate nodes are closer and closer to $q$ \cite{dearholt1988monotonic}. However, the computational complexity needed to find such a path is not given. Other works like Randomized Neighborhood Graphs \cite{arya1993approximate} guarantee polylogarithmic search time complexity. Empirically, the average length of greedy-routing paths grows polylogarithmically with the data size on the Navigable Small-World Networks (NSWN) \cite{kleinberg2000navigation, boguna2009navigability}. However, the time complexity of building these graphs is very high (at least \textit{O}($n^2$)), which is impractical for massive problems. 

Some recent graph-based methods try to address this problem by designing approximations for the graphs. For example, GNNS \cite{Hajebi2011Fast}, IEH \cite{Jin2014Fast}, and Efanna \cite{CongEfanna2016} are based on the $k$NN graph, which is an approximation of the Delaunay Graph. NSW \cite{malkov2014approximate} approximates the NSWN, FANNG \cite{Ben2016Fanng} approximates the Relative Neighborhood Graphs (RNG) \cite{toussaint1980relative}, and Hierarchical NSW (HNSW) \cite{MalkovYHNSW16} is proposed to take advantage of properties of the Delaunay Graph, the NSWN, and the RNG. Moreover, a hierarchical structure is used in HNSW to enable multi-scale hopping on different layers of the graph. 

These approximations are mainly based on intuition and generally lack rigorous theoretical support. In our experimental study, we find that they are still not powerful enough for billion-node applications, which are in great demand today. To further improve the search-efficiency and scalability of graph-based methods, we start with how ANNS is performed on a graph. Despite the diversity of graph indices, almost all graph-based methods \cite{dearholt1988monotonic, arya1993approximate, Ben2016Fanng, malkov2014approximate, Hajebi2011Fast, Jin2014Fast} share the same greedy best-first search algorithm (given in Algorithm \ref{search_alg}), we refer to it as the \textit{search-on-graph} algorithm below.

\begin{algorithm}[t]\small
	\caption{Search-on-Graph($G$, $\textbf{p}$, $\textbf{q}$, $l$)}
	\label{search_alg}
	\begin{algorithmic}[1]
		\Require graph $G$, start node $\textbf{p}$, query point $\textbf{q}$, candidate pool size $l$
		\Ensure $k$ nearest neighbors of $\textbf{q}$
		\State $i$=0, candidate pool $S = \emptyset$
		\State $S$.add($\textbf{p}$)
		\While{$i < l$}
		\State $i=$the index of the first unchecked node in $S$
		\State mark $\textbf{p}_\textbf{i}$ as checked
		\ForAll {neighbor $\textbf{n}$ of $\textbf{p}_\textbf{i}$ in $G$}
		\State $S$.add($\textbf{n}$)
		\EndFor
		\State sort $S$ in ascending order of the distance to $\textbf{q}$
		\State If $S$.size() $>l$, $S$.resize($l$) % remove the distant nodes in $S$ to keep its size no larger\State than $l$
		\EndWhile
		\State return the first $k$ nodes in $S$
	\end{algorithmic}
\end{algorithm}

Algorithm \ref{search_alg} tries to reach the query-node by the following greedy process. For a given query $q$, we are required to retrieve its nearest neighbors from the dataset. Algorithm \ref{search_alg} tries to reach the query point with the following greedy process. Given a starting node $p$, we follow the out-edges to reach $p$'s neighbors, and compare them with $q$ to choose one to proceed. The choosing principle is to minimize the distance to $q$, and the new iteration starts from the chosen node. We can see that the key to improve graph-based search is to shorten the search path formed by the algorithm and reduce the out-degree of the graph (\ie, reduce the number of choices of each node). Intuitively, to improve graph-based search we need to: (1) Ensure the connectivity of the graph to make sure the query (or the nearest neighbors of the query) is (are) reachable; (2) Lower the average out-degree of the graph and (3) shorten the search path to lower the search time complexity; (4) Reduce the index size (memory usage) to improve scalability. Methods such as IEH \cite{Jin2014Fast}, Efanna \cite{CongEfanna2016}, and HNSW \cite{MalkovYHNSW16}, use hashing, randomized KD-trees and multi-layer graphs to accelerate the search. However, these may result in huge memory usage  for massive databases. We aim to reduce the index size and preserve the search-efficiency at the same time. 

In this paper, we propose a new graph structure, named as Monotonic Relative Neighborhood Graph (MRNG), which guarantees a low average search time complexity (very close to logarithmic time complexity). To further reduce the indexing complexity, we propose the Navigating Spreading-out Graph (NSG), which is a good approximation of MRNG, inherits low search complexity and takes the four aspects into account. It is worthwhile to highlight our contributions as follows.
\begin{enumerate}
\item We first present comprehensive theoretical analysis on the attractive ANNS properties of a graph family called MSNET. Based on this, we propose a novel graph, MRNG, which ensures a close-logarithmic search complexity in expectation. 
\item To further improve the efficiency and scalability of graph-based ANNS methods, we consider four aspects of the graph: ensuring connectivity, lowering the average out-degree, shortening the search path, and reducing the index size. Motivated by these, we design a close approximation of the MRNG, called Navigating Spreading-out Graph (NSG), to address the four aspects simultaneously. The indexing complexity is reduced significantly compared to the MRNG and is practical for massive problems. Extensive experiments show that our approach outperforms the state-of-the-art methods in search performance with the smallest memory usage among graph-based methods.
\item The NSG algorithm is also tested on the E-commercial search scenario of Taobao (Alibaba Group). The algorithm has been integrated into their search engine for billion-node search.
\end{enumerate}

\section{PRELIMINARIES}

We use $E^d$ to denote the Euclidean space under the $l_2$ norm. The closeness of any two points $p,q$ is defined as the $l_2$ distance, $\delta(p,q)$, between them.
\subsection{Problem Setting}
Various applications in information retrieval and database management of high-dimensional data can be abstracted as the nearest neighbor search problem in high-dimensional space.  The Nearest Neighbor Search (NNS) problem is defined as follows \cite{Gionis1999Similarity}:
\begin{definition}[\textbf{Nearest Neighbor Search}]
Given a finite point set $S$ of $n$ points in space $E^d$, preprocess $S$ to efficiently return a point $p \in S$ which is closest to a given query point $q$.
\end{definition}
This naturally generalizes to the \textbf{$K$ Nearest Neighbor Search} when we require the algorithm to return $K$ points ($K>1$) which are the closest to the query point. The approximate version of the nearest neighbor search problem (ANNS) can be defined as follows \cite{Gionis1999Similarity}:
\begin{definition}[\textbf{$\epsilon$-Nearest Neighbor Search}]
Given a finite point set $S$ of $n$ points in space $E^d$, preprocess $S$ to efficiently answer queries that return a point $p$ in $S$ such that $\delta(p,q) \le (1+\epsilon)\delta(r,q)$, where $r$ is the nearest neighbor of $q$ in $S$. 
\end{definition}
Similarly, this problem can generalize to the \textbf{Approximate $K$ Nearest Neighbor Search (AKNNS)} when we require the algorithm to return $K$ points ($K>1$) such that $\forall{i = 1,...,K},\delta(p_i,q) \le (1+\epsilon)\delta(r,q)$. Due to the intrinsic difficulty of exact nearest neighbor search, most researchers turn to AKNNS. The main motivation is to trade a little loss in accuracy for much shorter search time. 

For the convenience of modeling and evaluation, we usually do not calculate the exact value of $\epsilon$. Instead we use another indicator to show the degree of the approximation: \textit{precision}. Suppose the point set returned by an AKNNS algorithm of a given query $q$ is $R'$ and the correct $k$ nearest neighbor set of $q$ is $R$, then the \textit{precision} (accuracy) is defined as below \cite{CongEfanna2016}. 
\begin{equation}
precision(R') = \frac{|R' \cap R|}{|R'|} = \frac{|R' \cap R|}{K}.
\end{equation}
A higher \textit{precision} corresponds to a smaller $\epsilon$, thus, a higher degree of approximation. In this paper, we use the \textit{precision} as the evaluation metric.

\subsection{Non-Graph-Based ANNS Methods}
Non-graph-based ANNS methods include tree-based methods, hashing-based methods, and quantization-based methods roughly. Some well-known and widely-used algorithms like the KD-tree \cite{Silpaanan2008Optimised}, $R^*$ tree \cite{beckmann1990r}, VA-file \cite{weber1998quantitative}, Locality Sensitive Hashing (LSH) \cite{Gionis1999Similarity}, and Product Quantization (PQ) \cite{jegou2011product} all belong to the above categories. Some works focus on improving the algorithms (\eg , \cite{HuangFZFN15, AroraSK018, gao2014dsh, liu2014sk, ge2013optimized}), while others focus on optimizing the existing methods according to different platforms and scenarios (\eg , \cite{chen2005robust, de2002efficient, teodoro2014approximate, zheng2016lazylsh}). 

\begin{figure}[t]
\begin{center}
\includegraphics[width=220pt]{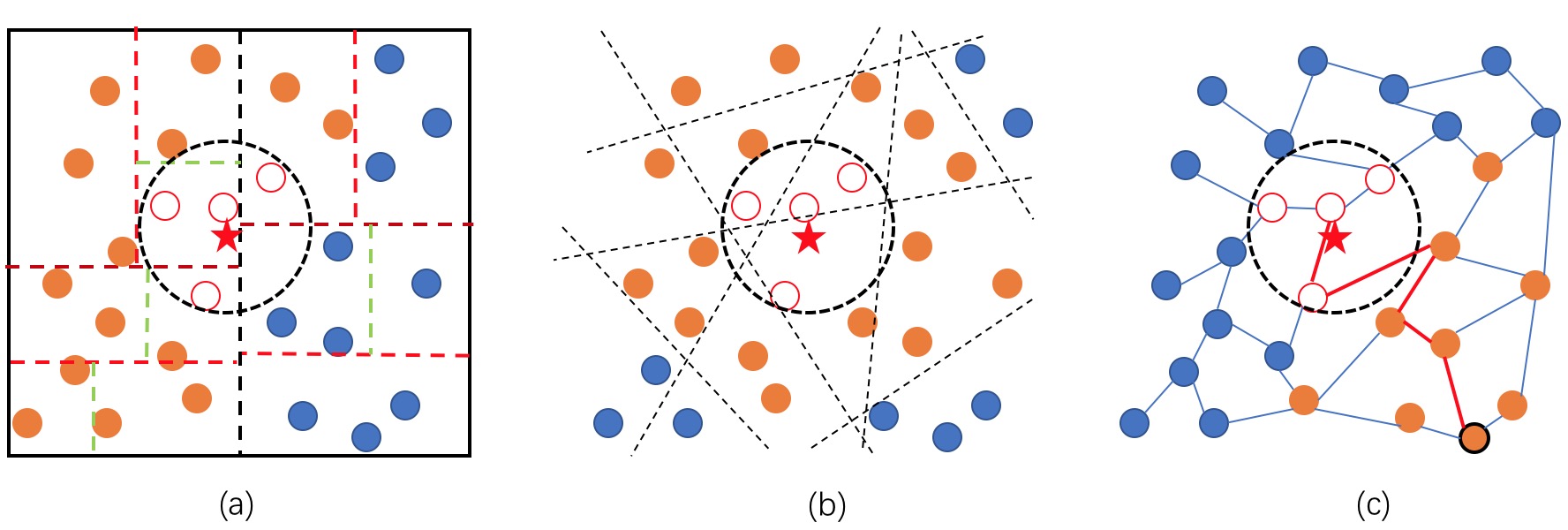}
\end{center}
   \caption{(a) is the tree index, (b) is the hashing index, and (c) is the graph index. The red star is the query (not included in the base data). The four red rings are its nearest neighbors. The tree and hashing index partition the space into several cells. Let each cell contain no more than three points. The out-degree of each node in the graph index is also no more than three. To retrieve the nearest neighbors of the query, we need to backtrack and check many leaf nodes for the tree index. We need to check nearby buckets with hamming radius 2 for the hashing index. As for the graph index, Algorithm 1 forms a search path as the red lines show. The orange circles are checked points during their search. The graph-based algorithm needs the least times of distance calculation.}
\label{tree-hash-graph}
\end{figure}

In the experimental study of some recent works \cite{Jin2014Fast, malkov2014approximate, MalkovYHNSW16, Ben2016Fanng, AroraSK018, CongEfanna2016}, graph-based methods have outperformed some well-known non-graph-based methods (\eg , KD-tree, LSH, PQ) significantly. This may be because the non-graph-based methods all try to solve the ANNS problem by partitioning the space and indexing the resulting subspaces for fast retrieval. Unfortunately, it is not easy to index the subspaces so that neighbor areas can be scanned efficiently to locate the nearest neighbors of a given query. See Figure \ref{tree-hash-graph} as an example (the figure does not include the PQ algorithm because it can be regarded as a hashing method from some perspective).  The non-graph-based methods need to check many nearby cells to achieve high accuracy. A large number of distant points are checked and this problem becomes much worse as the dimension increases (known as the curse of the dimensionality). Graph-based methods may start from a distant position to the query, but they usually approach the query quickly because all these methods are based on proximity graphs which typically express the neighbor relationship better. 

In summary, non-graph-based methods tend to check much more points than the graph-based methods to achieve the same accuracy. This will be shown in our later experiments. 

\subsection{Graph-Based ANNS Methods}
Given a finite point set $S$ in $E^d$, a graph is a structure composed of a set of nodes (representing the points) and edges which link some pairs of the nodes. A node $p$ is called a neighbor of $q$ if and only if there is an edge between $p$ and $q$. Graph-based ANNS solves the ANNS problem defined above via a graph index. Algorithm \ref{search_alg} is commonly used in most graph-based methods. In past decades, many graphs are designed for efficient ANNS. Here we will introduce several graph structures with appealing theoretical properties.

\textbf{Delaunay Graphs} (or Delaunay Triangulations) are defined as the dual graph of the Voronoi diagram \cite{aurenhammer1991voronoi}. It is shown to be a monotonic search network \cite{kurup1992database}, but the time complexity of high-dimensional ANNS on a Delaunay Graph is high. According to Harwood \etal \cite{Ben2016Fanng}, Delaunay Graphs quickly become almost fully connected at high dimensionality. Thus the efficiency of the search reduces dramatically. GNNS \cite{Hajebi2011Fast} is based on the (approximate) $k$NN graph, which is an approximation of Delaunay Graphs. IEH \cite{Jin2014Fast} and Efanna \cite{CongEfanna2016} are also based on the (approximate) kNN graph. They use hashing and Randomized KD-trees to provide better starting positions for Algorithm \ref{search_alg} on the $k$NN graph. Although they improve the performance, they suffer from large and complex indices. 

Wen \etal \cite{li2016approximate} propose a graph structure called \textit{DPG}, which is built upon an approximate $k$NN graph. They propose an edge selection strategy to cut off half of the edges from the prebuilt $k$NN graph and maximize the average angle among the remaining edges. Finally, they will make compensation on the graph to produce an undirected one. Their intuition is to make the angles among edges to distribute evenly around each node, but it lacks theoretical support. According to our empirical study, the DPG suffers from a large index and inferior search performance. 

\textbf{Relative Neighborhood Graphs} (RNG) \cite{toussaint1980relative} are not designed for the ANNS problem in the first place. However, RNG has shown great potential in ANNS. The RNG adopts an interesting edge selection strategy to eliminate the longest edge in all the possible triangles on $S$. With this strategy, the RNG reduces its average out-degree to a constant $C_d+o(1)$, which is only related to the dimension $d$ and usually very small \cite{jaromczyk1992relative}. However, according to Dearholt \etal's study \cite{dearholt1988monotonic}, the RNG does not have sufficient edges to be a monotonic search network due to the strict edge selection strategy. Therefore there is no theoretical guarantee on the length of the path. Dearholt \etal ~proposed a method to add edges to the RNG and turn it into a Monotonic Search Network (MSNET) with the minimal amount of edges, named as the minimal MSNET \cite{dearholt1988monotonic}. The algorithm is based on a prebuilt RNG, the indexing complexity of which is $O(n^{2-\frac{2}{1+d} +\epsilon})$, under the general position assumption \cite{jaromczyk1992relative}. The preprocessing of building the minimal MSNET is of \textit{O}($n^2\log n + n^3$) complexity. The total indexing complexity of the minimal MSNET is huge for high-dimensional and massive databases. Recent practical graph-based methods like FANNG \cite{Ben2016Fanng} and HNSW \cite{MalkovYHNSW16} adopt the RNG's edge selection strategy to reduce the out-degree of their graphs and improve the search performance. However, they did not provide a theoretical analysis.

\textbf{Navigable Small-World Networks} \cite{kleinberg2000navigation, boguna2009navigability} are suitable for the ANNS problem by their nature. The degree of the nodes and the neighbors of each node are all assigned according to a specific probability distribution. The length of the search path on this graph grows polylogarithmically with the network size, \textit{O}($A[logN]^\nu$), where $A$ and $\nu$ are some constants. This is an empirical estimation, which hasn't been proved. Thus the total empirical search complexity is \textit{O}($AD[logN]^\nu$), $D$ is the average degree of the graph. The degree of the graph needs to be carefully chosen, which has a great influence on the search efficiency. Like the other traditional graphs, the time complexity of building such a graph is about \textit{O}($n^2$) in a naive way, which is impractical for massive problems. Yury \etal \cite{malkov2014approximate} proposed NSW graphs to approximate the Navigable Small-World Networks and the Delaunay Graphs simultaneously. But soon they found that the degree of the graph was too high and there also existed connectivity problems in their method. They later proposed HNSW \cite{MalkovYHNSW16} to address this problem. Specifically, they stacked multiple NSWs into a hierarchical structure to solve the connectivity problem. The nodes in the upper layers are sampled through a probability distribution, and the size of the NSWs shrinks from bottom to top layer by layer. Their intuition is that the upper layers enable long-range short-cuts for fast locating of the destination neighborhood. Then they use the RNG's edge selection strategy to reduce the degree of their graphs. HNSW is the most efficient ANNS algorithm so far, according to some open source benchmarks on GitHub\footnote{https://github.com/erikbern/ann-benchmarks}. 

\textbf{Randomized Neighborhood Graphs} \cite{arya1993approximate} are designed for ANNS problem in high-dimensional space. It is constructed in a randomized way. They first partition the space around each node with a set of convex cones, then they select \textit{O}(log $n$) closest nodes in each cone as its neighbors. They prove that the search time complexity on this graph is \textit{O}((log $n$)$^3$), which is very attractive. However, its indexing complexity is too high. To reduce the indexing complexity, they propose a variant, called RNG$^*$. The RNG$^*$ also adopts the edge selection strategy of RNG and uses additional structures (KD-trees) to improve the search performance. However, the time complexity of its indexing is still as high as \textit{O}($n^2$) \cite{arya1993approximate}.

\section{Algorithms And Analysis}
\subsection{Motivation}
\label{Motivation}
The heuristic search algorithm, Algorithm \ref{search_alg}, has been widely used on various graph indices in previous decades. The algorithm walks over the graph and tries to reach the query greedily. Thus, two most crucial factors influencing the search efficiency are the number of greedy hops between the starting node and the destination and the computational cost to choose the next node at each step. In other words, the search time complexity on a graph can be written as \textit{O}($ol$), where $o$ is the average out-degree of the graph and $l$ is the length of the search path. 

In recent graph-based algorithms \cite{Hajebi2011Fast, Ben2016Fanng, malkov2014approximate, li2016approximate, Jin2014Fast, CongEfanna2016, MalkovYHNSW16}, the out-degree of the graph is a tunable parameter. In our experimental study, given a dataset and an expected search accuracy, we find there exist optimal degrees that result in optimal search performance. A possible explanation is that, given an expected accuracy, $ol$ is a convex function of $o$, and the minima of $ol$ determines the search performance of a given graph. In the high accuracy range, the optimal degrees of some algorithms (\eg, GNNS\cite{Hajebi2011Fast}, NSW \cite{malkov2014approximate}, DPG \cite{li2016approximate}) are very large, which leads to very large graph size. The minima of their $ol$ are also very large, leading to inferior performance. Other algorithms \cite{Jin2014Fast, CongEfanna2016, MalkovYHNSW16} use extra index structures to improve their start position in Algorithm \ref{search_alg} in order to minimize $l$ directly. But this also leads to large index size.

From our perspective, we can improve the ANNS performance of graph-based methods by minimizing $o$ and $l$ simultaneously. Moreover, we need to make the index as small as possible to handle large-scale data. What is always ignored is that one should first ensure the existence of a path from the starting node to the query. Otherwise, the targets will never be reached. In summary, we aim to design a graph index with high ANNS performance from the following four aspects. \textbf{(1) ensuring the connectivity of the graph, (2) lowering the average out-degree of the graph, (3) shortening the search path, and (4) reducing index size.} Point (1) is easy to achieve. If the starting node varies with the query, one should ensure that the graph is strongly connected. If the starting node is fixed, one should ensure all other nodes are reachable by a DFS from the starting node. As for point (2)-(4), we address these points simultaneously by designing a better sparse graph for ANNS problem. Below we will propose a new graph structure called Monotonic Relative Neighborhood Graph (MRNG) and a theoretical analysis of its important properties, which leads to better ANNS performance. 

\subsection{Graph Monotonicity And Path Length}
The speed of ANNS on graphs is mainly determined by two factors, the length of the search path and the average out-degree of the graph. Our goal is to find a graph with both low out-degrees and short search paths. We will begin our discussion with how to design a graph with very short search paths. Before we introduce our proposal, we will first provide a detailed analysis of a category of graphs called \textit{Monotonic Search Networks (MSNET)}, which are first discussed in \cite{dearholt1988monotonic} and have shown great potential in ANNS. Here we will present the definition of the MSNETs. 
\subsubsection{Definition And Notation}
Given a point set $S$ in $E^d$ space, $p, q$ are any two points in $S$. Let $B(p, r)$ denote an \textbf{open sphere} such that $B(p, r) = \{x|\delta(x,p) < r\}$. Let $\overset{\longrightarrow}{pq}$ denote a directed edge from $p$ to $q$. 

First we give a definition of the monotonic path in a graph as follows:
\begin{definition}[\textbf{Monotonic Path}]
Given a finite point set $S$ of $n$ points in space $E^d$, $p, q$ are any two points in $S$ and $G$ denotes a graph defined on $S$. Let $v_1, v_2, ... , v_k, (v_1=p, v_k=q)$ denote a path from $p$ to $q$ in $G$, \ie, $\forall{i = 1,...,k-1}$, edge $\overset{\longrightarrow}{v_iv_{i+1}} \in G$. This path is a monotonic path if and only if $\forall{i = 1,...,k-1}, \delta(v_i, q) > \delta(v_{i+1},q)$.
\end{definition}
Then the monotonic search network can be defined as follows:
\begin{definition}[\textbf{Monotonic Search Network}]
Given a finite point set $S$ of $n$ points in space $E^d$, a graph defined on $S$ is a monotonic search network if and only if there exists at least one monotonic path from $p$ to $q$ for any two nodes $p,q \in S$.
\end{definition}

%%DENG ^^ **any**  two nodes? Not **all** node-pairs?

\subsubsection{Analysis On Monotonic Search Networks}

\begin{figure}[t]
\begin{center}
\includegraphics[width=160pt]{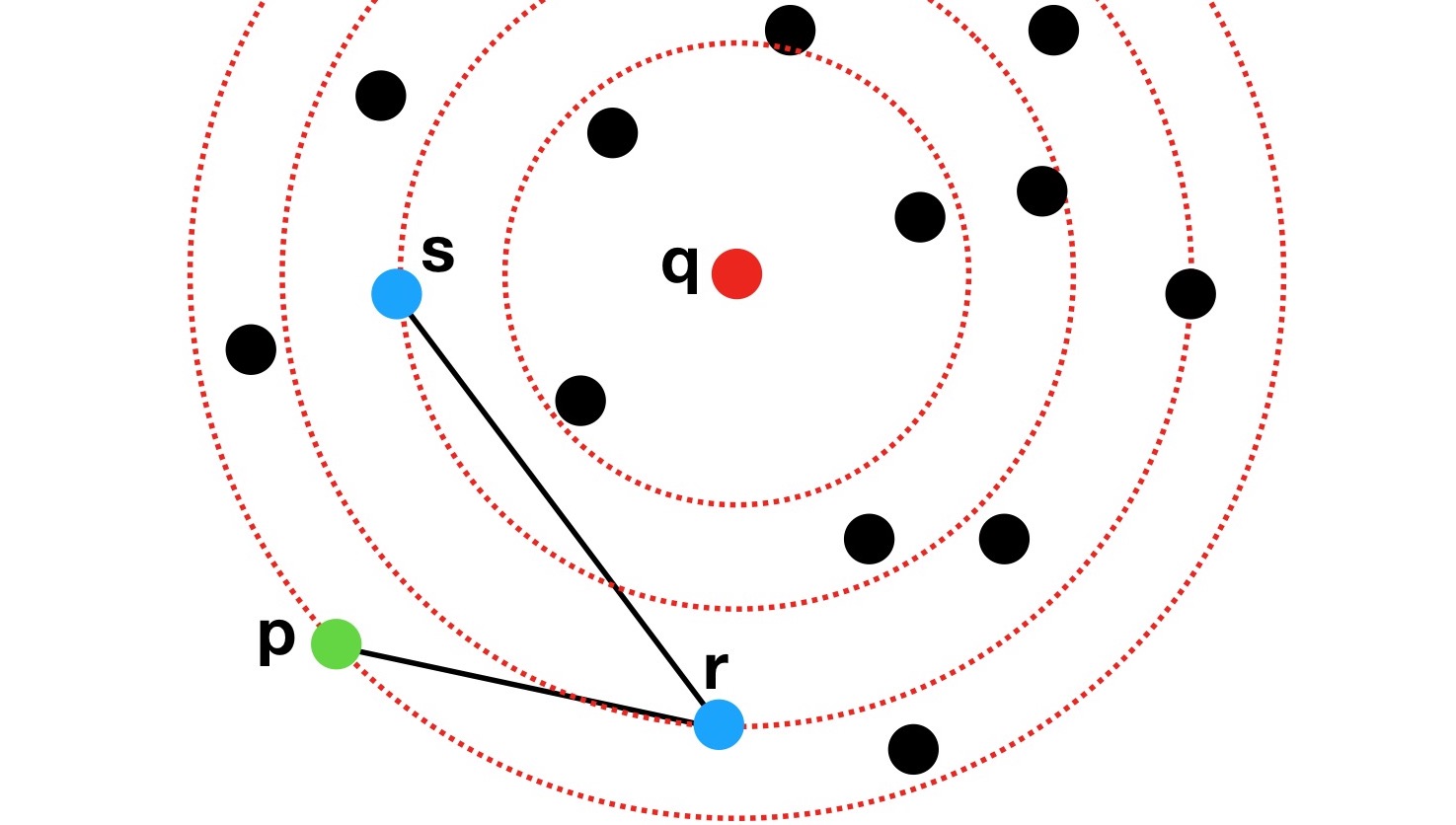}
\end{center}
   \caption{An illustration of the search in an MSNET. The query point is $q$ and the search starts with point $p$. At each step, Algorithm \ref{search_alg} will select a node that is the closest to $q$ among the neighbors of the current nodes. Suppose $p,r,s$ is on a monotonic path selected by Algorithm \ref{search_alg}. The search region shrinks from sphere $B(q,\delta(p,q))$ to $B(q,\delta(r,q))$, then to $B(q,\delta(s,q))$. The number of nodes in each sphere (may be checked) decreases by some ratio at each step until only $q$ is left in the final sphere.}
\label{monotonic}
\end{figure}

The Monotonic Search Networks (MSNET) \cite{dearholt1988monotonic} are a category of graphs which can guarantee a monotonic path between any two nodes in the graph. MSNETs are strongly connected graphs by nature, which ensures the connectivity. When traveling on a monotonic path, we always make progress to the destination at each step. In an MSNET, Dearholt \etal ~hypothesized that one may be able to use Algorithm \ref{search_alg} (commonly used in graph-based search) to detect the monotonic path to the destination node, \ie , no backtracking is needed \cite{dearholt1988monotonic}, which is a very attractive property. Backtracking means, when the algorithm cannot find a closer neighbor to the query (\ie, a local optimal), we need to go back to the visited nodes and find an alternative direction to move on. The monotonicity of the MSNETs makes the search behavior of Algorithm \ref{search_alg} on the graph almost definite and analyzable. However, Dearholt \etal \cite{dearholt1988monotonic} failed to provide a proof of this property. In this section, we will give a concrete proof of this property.

%We define the \textit{length of a path} in a graph is the number of the edges on the path. We need first to prove that the number of iterations of Algorithm \ref{search_alg} in expectation is the same as the monotonic path length in expectation of a given MSNET. This problem is equivalent to that Algorithm \ref{search_alg} can find a monotonic path in a given MSNET greedily without back-tracing. 

\begin{theorem}
\label{MSNET_find}
Given a finite point set $S$ of $n$ points, randomly distributed in space $E^d$ and a monotonic search network $G$ constructed on $S$, a monotonic path between any two nodes $p,q$ in $G$ can be found by Algorithm \ref{search_alg} without backtracking. 
\end{theorem}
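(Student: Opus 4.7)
The proof hinges on a single structural fact about MSNETs, which I would isolate as a key lemma: for every $v \in S$ with $v \neq q$, there exists an out-edge $\overset{\longrightarrow}{vv'} \in G$ with $\delta(v',q) < \delta(v,q)$. This is immediate from the MSNET definition, since the guaranteed monotonic path from $v$ to $q$ begins with precisely such an edge. This is the ``no local minimum'' property that prevents greedy search on an MSNET from stalling short of $q$, and essentially everything else in the proof is bookkeeping around it.

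With the lemma in hand, the main step is an induction on the iterations of Algorithm~\ref{search_alg}. Let $r_0 = p, r_1, r_2, \ldots$ be the sequence of nodes selected on line~4 as the ``first unchecked'' and then marked checked on line~5. The inductive invariant I would maintain is $\delta(r_{i+1}, q) < \delta(r_i, q)$. When $r_i$ is expanded, the key lemma supplies at least one neighbor $v'$ with $\delta(v',q) < \delta(r_i,q)$, which enters the pool $S$ on line~7. It remains to verify that the sort-and-truncate on lines~9--10 cannot evict every such $v'$. Here I would observe that, immediately before expansion, every entry of $S$ strictly closer to $q$ than $r_i$ was already checked (by definition of ``first unchecked''), and that there are at most $l-1$ such entries; hence the top~$l$ of the expanded pool retains room for at least one unchecked entry at distance strictly less than $\delta(r_i,q)$, which becomes the new first unchecked $r_{i+1}$.

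The rest is a finite-descent argument: the sequence $\delta(r_0,q) > \delta(r_1,q) > \cdots$ is strictly decreasing over the finite point set $S$ and so must terminate, and the key lemma's hypothesis fails only at $r_i = q$, so termination occurs precisely at the query. The nodes $r_0,\ldots,r_k = q$ are pairwise distinct and are never revisited during the run, which is what ``without backtracking'' amounts to; chaining the strictly closer successors promised by the lemma at each $r_i$ exhibits an explicit monotonic path from $p$ to $q$ inside $G$ that the algorithm effectively traces out. The step I expect to require the most care is the pool-semantics bookkeeping just described: one must verify that truncation to size~$l$ cannot wipe out \emph{every} closer-than-$r_i$ unchecked candidate at once, even in the worst case when many far neighbors of $r_i$ enter $S$ simultaneously. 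The counting argument above is what pins this down, and once the invariant is established the theorem follows directly.
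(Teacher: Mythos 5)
Your proof is correct and follows essentially the same route as the paper's: the same key fact (every node $v \neq q$ has an out-neighbor strictly closer to $q$, taken from the first edge of a guaranteed monotonic path) drives an induction showing that the distances of the successively expanded nodes to $q$ strictly decrease, hence the search never stalls, never backtracks, and terminates exactly at $q$. The only difference is that you additionally verify that the size-$l$ pool's sort-and-truncate step cannot evict every closer unchecked candidate, a bookkeeping point the paper's proof sidesteps by treating the next expanded node as simply the closest neighbor of the current one; this is a welcome refinement rather than a different argument.
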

\begin{proof}
Please see the detailed proof in the appendix. 
\end{proof}

From Theorem \ref{MSNET_find}, we know that we can reach the query $q\in S$ on a given MSNET with Algorithm \ref{search_alg} without backtracking, Therefore, the expectation of the iteration number is the same as the length expectation of a monotonic path in the MSNET. Before we discuss the length expectation of a monotonic path in a given MSNET, we first define the MSNETs from a different perspective, which will help with the analysis.

\begin{lemma}
\label{NewDefMSNET}
Given a graph $G$ on a set $S$ of $n$ points in $E^d$, $G$ is an MSNET if and only if for any two nodes $p,q$, there is at least one edge $\overset{\longrightarrow}{pr}$ such that $r \in B(q, \delta(p,q))$. 
\end{lemma}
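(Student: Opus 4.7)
The plan is to prove the biconditional in the obvious two directions, using the definition of MSNET directly for the forward implication and a constructive descent argument for the converse.

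For the forward direction ($\Rightarrow$), I would unpack the definition of MSNET. Assume $G$ is an MSNET and fix arbitrary $p,q\in S$ with $p\neq q$ (the case $p=q$ is vacuous or can be handled separately). By definition there is a monotonic path $v_1=p, v_2, \ldots, v_k=q$ in $G$. In particular, $\overrightarrow{v_1 v_2} = \overrightarrow{p v_2}$ is an edge of $G$, and monotonicity gives $\delta(v_2,q) < \delta(v_1,q) = \delta(p,q)$. Taking $r := v_2$ yields the desired edge $\overrightarrow{pr}$ with $r \in B(q, \delta(p,q))$.

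For the converse ($\Leftarrow$), I would build a monotonic path by iterated descent. Given any $p,q\in S$, set $v_1 := p$. If $v_i \neq q$, invoke the hypothesis on the pair $(v_i, q)$ to obtain an edge $\overrightarrow{v_i v_{i+1}}$ with $v_{i+1} \in B(q, \delta(v_i, q))$, i.e.\ $\delta(v_{i+1}, q) < \delta(v_i, q)$. Iterate. The sequence of distances $\delta(v_1,q) > \delta(v_2,q) > \cdots$ is strictly decreasing, so all $v_i$ are distinct; since $S$ is finite, the process must terminate, and the only way it can halt is at $v_k = q$. The resulting path $v_1, \ldots, v_k$ is monotonic by construction, so $G$ is an MSNET.

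I do not expect any real obstacle here: the lemma essentially just rephrases the global existence of a monotonic path as a local edge condition, and the only subtle point is justifying termination of the descent. Termination follows cleanly from finiteness of $S$ together with the strict decrease of $\delta(v_i,q)$, so no metric or dimension-specific argument is needed. One minor care point worth flagging in the write-up is that the hypothesis must be read as applying to every ordered pair $(p,q)$ with $p\neq q$; otherwise the descent step could fail at some intermediate $v_i$.
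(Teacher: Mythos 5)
Your proof is correct and follows essentially the same route as the paper: the forward direction takes $r$ to be the second node of a monotonic path, and the converse builds the path by repeated application of the edge condition, with termination at $q$ forced by the strictly decreasing distances and the finiteness of $S$. Your explicit remark about restricting the hypothesis to ordered pairs with $p\neq q$ and about why the descent can only halt at $q$ is, if anything, slightly cleaner than the paper's induction-plus-reference-to-Theorem~1 phrasing.
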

\begin{proof}
Please see the detailed proof in the appendix.
\end{proof}

From Lemma \ref{NewDefMSNET} we can calculate the length expectation of the monotonic path in the MSNETs as follows. 
\begin{theorem}
\label{MSNETPathLength}
Let $S$ be a finite point set of $n$ points uniformly distributed in a finite subspace in $E^d$. Suppose the volume of the minimal convex hull containing $S$ is $V_S$. The maximal distance between any two points in $S$ is $R$. We impose a constraint on $V_S$ such that when $d$ is fixed, $\exists{\kappa},  \kappa V_S\ge V_B(R)$, where $\kappa$ is a constant independent of $n$, and $V_B(R)$ is the volume of the sphere with radius $R$. We define $\triangle{r}$ as $\triangle{r} = min\{|\delta(a,b) - \delta(a,c)|, |\delta(a,b) - \delta(b,c)|, |\delta(a,c) - \delta(b,c)| \}$, for all possible non-isosceles triangles $abc$ on $S$. $\triangle{r}$ is a decreasing function of $n$.

For a given MSNET defined on such $S$, the length expectation of a monotonic path from $p$ to $q$, for any $p,q\in S$, is $O(n^{1/d}log(n^{1/d}) / \triangle{r})$.
\end{theorem}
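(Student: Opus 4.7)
My plan is to combine the MSNET reformulation in Lemma~\ref{NewDefMSNET} with a careful contraction estimate for the distance to the query. Lemma~\ref{NewDefMSNET} and Algorithm~\ref{search_alg} together ensure that the greedy search produces a sequence $p_1,p_2,\ldots,p_k=q$ with strictly decreasing distances to $q$. Under the uniform distribution the general-position assumption holds almost surely, so every triangle $p_iqp_{i+1}$ is non-isosceles, and by the definition of $\triangle r$ this forces $\delta(p_i,q)-\delta(p_{i+1},q)\ge \triangle r$ at every step. Hence the path length is at most $\delta(p_1,q)/\triangle r\le R/\triangle r$, reducing the problem to bounding $R$ and to sharpening this bound in expectation.

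To bound $R$ itself, I would invoke the volume hypothesis $\kappa V_S\ge V_B(R)=C_d R^d$, which immediately yields $R\le(\kappa V_S/C_d)^{1/d}$. In the natural scaling where $V_S$ grows linearly in $n$ (constant point density), this gives $R=O(n^{1/d})$ and contributes the leading $n^{1/d}$ factor in the target bound.

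To obtain the additional $\log(n^{1/d})$ factor I would use a dyadic shell decomposition around $q$: set $r_j=R\cdot 2^{-j}$ for $j=0,1,\ldots$, and group the visited nodes into the shells $(r_{j+1},r_j]$. There are only $O(\log n^{1/d})$ such shells before the radius falls below the minimum hop size $\triangle r$. For each shell I would use the uniform density of $S$, combined with the $\triangle r$ lower bound on consecutive distance drops, to show the expected number of visits inside the shell is $O(n^{1/d}/\triangle r)$; summing over shells then yields $O(n^{1/d}\log(n^{1/d})/\triangle r)$, which is the claimed bound.

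The main obstacle, as I see it, is the per-shell expected count. A purely worst-case reading of ``shell width divided by $\triangle r$'' does not exploit the uniform distribution of $S$, and without averaging one loses the logarithmic improvement. The delicate step will be showing that the greedy hop guaranteed by Lemma~\ref{NewDefMSNET} -- which is only promised to land somewhere inside $B(q,\delta(p_i,q))$ -- still lands, in expectation over the random placement of $S$, far enough inside the current shell to prevent the path from lingering there for more than $O(n^{1/d}/\triangle r)$ hops. I expect this to require a careful interaction between the shell geometry, the volume constraint, and the density estimate on $S$, and it is where the expectation (as opposed to deterministic worst case) in the statement of Theorem~\ref{MSNETPathLength} becomes essential.
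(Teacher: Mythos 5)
Your first two steps are sound and, taken together, they already prove the theorem by a route that is more elementary than the paper's: monotonicity plus the almost-sure non-isosceles condition gives $\delta(v_i,q)-\delta(v_{i+1},q)\ge\triangle{r}$ at every hop, so telescoping gives a path length of at most $R/\triangle{r}+O(1)$, and the volume constraint $\kappa V_S\ge V_B(R)$ together with constant density ($n=A\,V_S$, which is exactly how the paper uses uniformity) gives $R\le (\kappa n/(A C_d))^{1/d}=O(n^{1/d})$. That is a deterministic bound of $O(n^{1/d}/\triangle{r})$, which is \emph{stronger} than the stated $O(n^{1/d}\log(n^{1/d})/\triangle{r})$ and hence implies it; no expectation argument is needed at this point. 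The paper takes a different (and looser) route: it converts the per-hop drop $\triangle{r}$ into a per-hop volume contraction of the search ball, $\eta_i\le\bigl((R-\triangle{r})/R\bigr)^d$, bounds the hop count by a logarithm of the total volume ratio, and then shows by an order-of-growth computation (L'H\^opital) that this bound is $\Theta\bigl(n^{1/d}\log(n^{1/d})/\triangle{r}\bigr)$; the logarithmic factor in the theorem is an artifact of that volume-ratio method, roughly $d\log R$ divided by $\log\bigl(R/(R-\triangle{r})\bigr)\approx d\,\triangle{r}/R$, not something you need to manufacture.

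The flaw in your plan is the third step. Chasing the $\log(n^{1/d})$ factor via dyadic shells is both unnecessary and, as written, incorrect: the number of shells with $r_j\ge\triangle{r}$ is $O(\log(R/\triangle{r}))$, not $O(\log n^{1/d})$, and since $\triangle{r}$ decreases with $n$ at an unspecified rate the two are not interchangeable, so your summation would yield $O(n^{1/d}\log(R/\triangle{r})/\triangle{r})$ rather than the claimed bound. Moreover, the ``delicate step'' you anticipate is a non-issue: inside a shell of width $r_j/2$ the monotone drops of size $\ge\triangle{r}$ give a \emph{deterministic} visit count of at most $r_j/(2\triangle{r})+1$, with no use of the uniform density, and summing these shell counts just reproduces the telescoping bound $O(R/\triangle{r})$. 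So either stop after your second step and observe that the deterministic bound dominates the claimed one, or, if you want to mirror the paper, replace the shell decomposition by the per-hop volume-contraction estimate and the asymptotic comparison described above.
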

\begin{proof}
Please see the detailed proof in the appendix.
\end{proof}

Theorem \ref{MSNETPathLength} is a general property for all kinds of MSNETs. The function $\triangle{r}$ has no definite expression about $n$ because it involves randomness. We have observed that, in practice, $\triangle{r}$ decreases very slowly as $n$ increases. In experiments, we estimate the function of $\triangle{r}$ on different public datasets, based on the proposed graph in this paper. We find that $\triangle{r}$ is mainly influenced by the data distribution and data density. Results are shown in the experiment section. 

Because $O(n^{\frac{1}{d}})$ increases very slowly when $n$ increases in high dimensional space, the length expectation of the monotonic paths in an MSNET, $O(n^{1/d}log(n^{1/d}) / \triangle{r})$, will have a growth rate very close to $O(\log n)$. This is also verified in our experiments. Likewise, we can see that the growth rate of the length expectation of the monotonic paths is lower when $d$ is higher.

In Theorem \ref{MSNETPathLength}, the assumption on the volume of the minimal convex hull containing the data points is actually a constraint on the data distribution. We try to avoid the special shape of the data distribution (\eg, all points form a straight line), which may influence the conclusion. For example, if the data points are all distributed uniformly on a straight line, the length expectation of the monotonic paths on such a dataset will grow almost linearly with $n$. 

In addition, though we assume a uniform distribution of the data points, the property still holds to some extent on other various distributions in practice. Except for some extremely special shape of the data distribution, we can usually expect that, as the search sphere shrinks at each step of the search path, the amount of the nodes remaining in the sphere decreases by some ratio. The ratio is mainly determined by the data distribution, as shown in Figure \ref{monotonic}.

In addition to the length expectation of the search path, another important factor that influences the search complexity is the average out-degree of the graph. The degree of some MSNETs, like the Delaunay Graphs, grows when $n$ increases \cite{aurenhammer1991voronoi}. There is no unified geometrical description of the MSNETs, therefore there is no unified conclusion about how the out-degree of the MSNETs scales. 

Dearholt \etal \cite{dearholt1988monotonic} claim that they have found a way to construct an MSNET with a minimal out-degree. However, there are two problems with their method. Firstly, they did not provide analysis on how the degree of their MSNET scales with $n$. This is mainly because the MSNET they proposed is built by adding edges to an RNG and lacks a geometrical description. Secondly, the proposed MSNET construction method has a very high time complexity (at least $O(n^{2-\frac{2}{1+d} +\epsilon} + n^2\log n + n^3)$) and is not practical in real large-scale scenarios. Below we will propose a new type of MSNET with lower indexing complexity and constant out-degree expectation (independent of $n$). Simply put, the search complexity on this graph scales with $n$ in the same speed as the length expectation of the monotonic paths.

\subsection{Monotonic Relative Neighborhood Graph}
\label{MRNG}
\begin{figure}[t]
\begin{center}
\includegraphics[width=200pt]{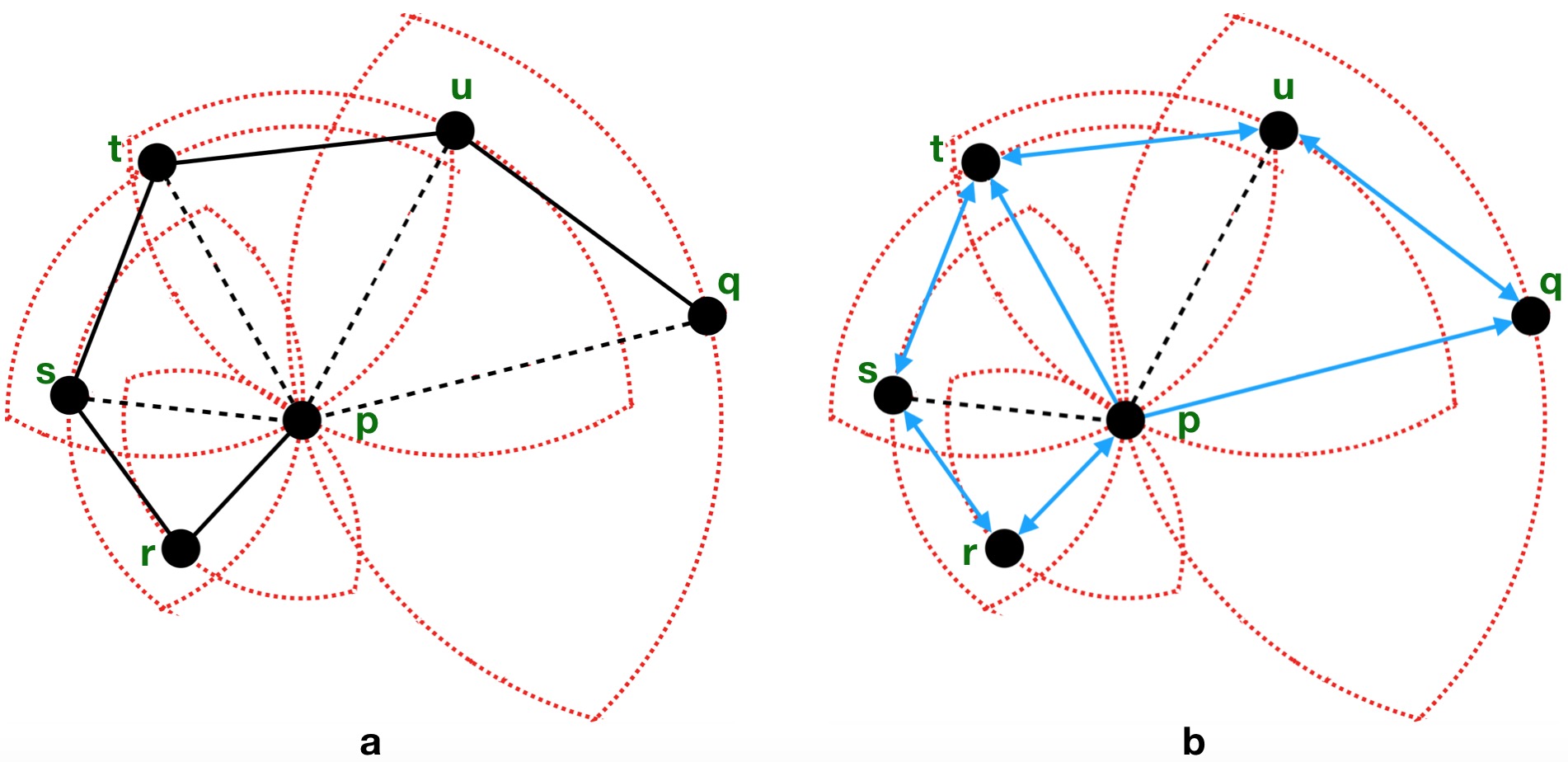}
\end{center}
   \caption{A comparison between the edge selection strategy of the RNG (a) and the MRNG (b). An RNG is an undirected graph, while an MRNG is a directed one. In (a), $p$ and $r$ are linked because there is no node in $lune_{pr}$. Because $r \in lune_{ps}$, $s \in lune_{pt}$, $t \in lune_{pu}$, and $u \in lune_{pq}$, there are no edges between $p$ and $s,t,u,q$. In (b), $p$ and $r$ are linked because there is no node in $lune_{pr}$. $p$ and $s$ are not linked because $r \in lune_{ps}$ and $pr, sr \in$ \textit{MRNG}. Directed edge $\overset{\longrightarrow}{pt} \in$ \textit{MRNG} because $\overset{\longrightarrow}{ps} \notin$ \textit{MRNG}. However, $\overset{\longrightarrow}{tp} \notin$ \textit{MRNG} because $\overset{\longrightarrow}{ts} \in$ \textit{MRNG}. We can see that the MRNG is defined in a recursive way, and the edge selection strategy of the RNG is more strict than MRNG's. In the RNG(a), there is a monotonic path from $q$ to $p$, but no monotonic path from $p$ to $q$. In the MRNG(b), there is at least one monotonic path from any node to another node.}
\label{rulecompare}
\end{figure}

In this section, we describe a new graph structure for ANNS called as MRNG, which belongs to the MSNET family. To make the graph sparse, HNSW and FANNG turn to RNG\cite{toussaint1980relative}, but it was proved that the RNG does not have sufficient edges to be an MSNET\cite{dearholt1988monotonic}. Therefore there is no theoretical guarantee of the search path length in an RNG, and the search on an RNG may suffer from long detours.

Consider the following example. Let $lune_{pq}$ denote a region such that $lune_{pq} = B(p,\delta(p,q)) \cap B(q,\delta(p,q)) $\cite{jaromczyk1992relative}. Given a finite point set $S$ of $n$ points in space $E^d$, for any two nodes $p,q \in S$, edge $pq \in$ RNG if and only if $lune_{pq} \cap S = \emptyset$. In Figure \ref{rulecompare}, (a) is an illustration of a non-monotonic path in an RNG. Node $s$ is in $lune_{pr}$, so $p, s$ are not connected. Similarly, $t, u, q$ are not connected to $p$. When the search goes from $p$ to $q$, the path is non-monotonic (\eg, $rq > pq$). 

We find that this problem is mainly due to RNG's edge selection strategy. Dearholt \etal ~tried to add edges to the RNG \cite{dearholt1988monotonic} to produce an MSNET with the fewest edges, but this method is very time-consuming. Instead, inspired by the RNG, we propose a new edge selection strategy to construct monotonic graphs. The resulting graph may not be the minimal MSNET but it is very sparse. Based on the new strategy, we propose a novel graph structure called Monotonic Relative Neighborhood Graph (MRNG). Formally, an MRNG can be defined as follows:

%First, we will define the Nearest Neighbor Graph (NNG) as follows:

%\begin{definition}[NNG]
%Given a finite point set $S$ of $n$ points in space $E^d$, a graph G is an NNG if and only if, for any node $p\in G$, the only node $q$ connected to $p$ is the closest to $p$ in $S$.
%\end{definition}

%This definition can be ambiguous when there exist more than one nodes that are equidistant to some node. One can easily remove the ambiguousness by assigning a unique index for each node and link the node to its nearest neighbor with the smallest index.
\begin{figure}[t]
\begin{center}
\includegraphics[width=150pt]{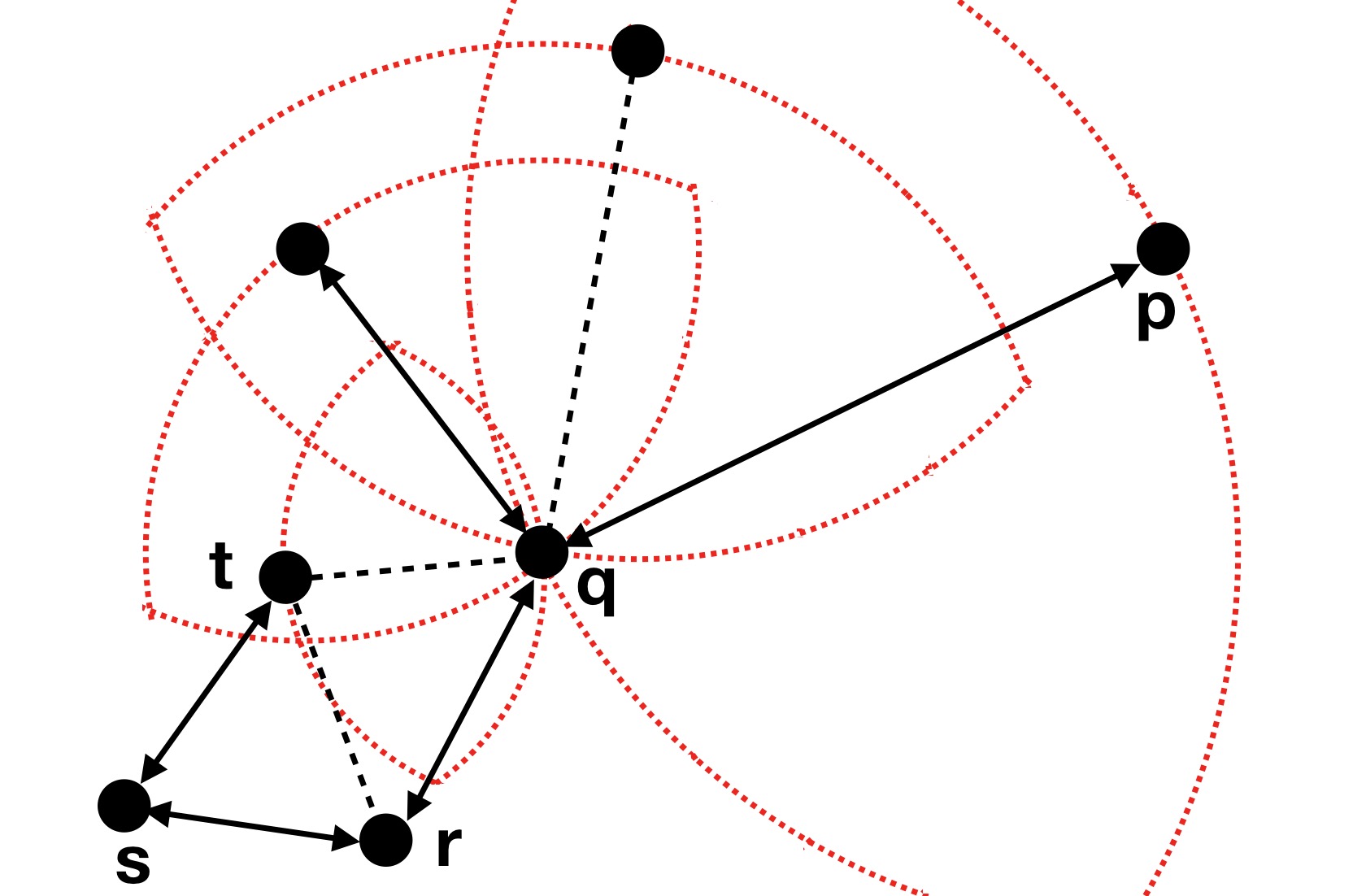}
\end{center}
   \caption{An illustration of the necessity that \textit{NNG} $\subset$ \textit{MRNG}. If not, the graph cannot be an MSNET. Path $p,q,r,s,t$ is an example of non-monotonic path from $p$ to $t$. In this graph, $t$ is the nearest neighbor of $q$ but not linked to $q$. We apply the MRNG's edge selection strategy on this graph. According to the definition of the strategy, $t$ and $r$ can never be linked. When the search goes from $p$ to $t$, it must detour with at least one more step through $s$. This problem will be worse in practice.}
\label{nndetour}
\end{figure}

\begin{definition}[MRNG]
\label{MRNGDef}
Given a finite point set $S$ of $n$ points in space $E^d$, an MRNG is a directed graph with the set of edges satisfying the following property: for any edge $\overset{\longrightarrow}{pq}$, $\overset{\longrightarrow}{pq} \in $MRNG if and only if ~$lune_{pq} \cap S = \emptyset$ or $\forall r \in (lune_{pq}\cap S), \overset{\longrightarrow}{pr} \notin$ MRNG.
\end{definition}

We avoid ambiguity in the following way when isosceles triangles appear. If $\delta(p,q) = \delta(p,r)$ and $qr$ is the shortest edge in triangle $pqr$, we select the edge according to a predefined index, \ie , we select $\overset{\longrightarrow}{pq}$ if \textit{index(q)} $<$ \textit{index(r)}. We can see that the MRNG is defined in a recursive way. In other words, Definition \ref{MRNGDef} implies that for any node $p$, we should select its neighbors from the closest to the farthest. The difference between MRNG's edge selection strategy and RNG's is that, for any edge $pq\in MRNG$, $lune_{pq} \cap S$ is not necessarily $\emptyset$. The difference can be seen in Figure \ref{rulecompare} clearly. Here we show that the MRNG is an MSNET.

\begin{theorem}
\label{MRNGProve}
Given a finite point set $S$ of $n$ points. An MRNG defined on $S$ is an MSNET.
\end{theorem}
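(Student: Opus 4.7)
The plan is to reduce the proof to Lemma \ref{NewDefMSNET}, which reformulates the MSNET condition as: for every ordered pair $(p,q)$, there must exist an outgoing edge $\overset{\longrightarrow}{pr}$ with $r \in B(q,\delta(p,q))$. The structure of Definition \ref{MRNGDef} is essentially tailor-made for this reformulation, so the proof should be short and largely a matter of unpacking the definition via a dichotomy on whether $\overset{\longrightarrow}{pq}$ is itself in the graph.

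First I would fix arbitrary distinct $p,q \in S$ and split into two cases. In the easy case $\overset{\longrightarrow}{pq} \in $ MRNG, the witness required by Lemma \ref{NewDefMSNET} is $r = q$ itself, since trivially $q \in B(q,\delta(p,q))$. In the remaining case $\overset{\longrightarrow}{pq} \notin$ MRNG, I would apply the contrapositive of Definition \ref{MRNGDef}: the negation of the disjunction ``$lune_{pq}\cap S = \emptyset$ or $\forall r \in lune_{pq}\cap S,\ \overset{\longrightarrow}{pr}\notin$ MRNG'' is precisely the assertion that there exists some $r \in lune_{pq}\cap S$ with $\overset{\longrightarrow}{pr} \in $ MRNG. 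Because $lune_{pq} = B(p,\delta(p,q)) \cap B(q,\delta(p,q)) \subseteq B(q,\delta(p,q))$, this $r$ automatically satisfies $\delta(r,q) < \delta(p,q)$, which is exactly the witness demanded by Lemma \ref{NewDefMSNET}. Combining the two cases yields the MSNET property.

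The one subtlety worth addressing is well-foundedness of the recursive Definition \ref{MRNGDef}: to legitimately use the contrapositive, I need to know that the ``$\overset{\longrightarrow}{pr} \in$ MRNG'' predicate is unambiguously defined for the candidate vertices $r \in lune_{pq}$. I would remark that any such $r$ satisfies $\delta(p,r) < \delta(p,q)$, so if one processes $p$'s candidate neighbors in order of increasing distance from $p$ (as the paper already indicates after Definition \ref{MRNGDef}), the membership of $\overset{\longrightarrow}{pr}$ is determined strictly before that of $\overset{\longrightarrow}{pq}$. The isosceles tie-breaking rule via the predefined index ensures the recursion is still strictly decreasing on a total order even in degenerate cases, so the definition is unambiguous.

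I do not expect a real obstacle here; the only thing to watch for is to state the contrapositive cleanly and to cite Lemma \ref{NewDefMSNET} rather than re-proving monotonicity by induction on a path. The proof is essentially a one-step reduction, and the brevity is justified by the fact that MRNG's edge rule was engineered precisely so that this reduction would go through.
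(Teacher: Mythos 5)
Your proposal is correct and follows essentially the same route as the paper's proof: a case split on whether $\overset{\longrightarrow}{pq}$ is an edge, using the negation of Definition~\ref{MRNGDef} to extract a node $r \in lune_{pq}$ with $\overset{\longrightarrow}{pr} \in$ MRNG and $\delta(r,q) < \delta(p,q)$, and then concluding via Lemma~\ref{NewDefMSNET}. Your added remark on the well-foundedness of the recursive definition is a harmless extra that the paper omits.
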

\begin{proof}
Please see the detailed proof in the appendix.
\end{proof}

Though different in the structure, MRNG and RNG share some common edges. We will start with defining the Nearest Neighbor Graph (NNG) as follows: 
\begin{definition}[NNG]
Given a finite point set $S$ of $n$ points in space $E^d$, an NNG is the set of edges such that, for any edge $\overset{\longrightarrow}{pq}$, $\overset{\longrightarrow}{pq} \in $ NNG if and only if $q$ is the closest neighbor of $p$ in $S$.
\end{definition}

Similarly, we can remove the ambiguity in the NNG by assigning a unique index for each node and linking the node to its nearest neighbor with the smallest index. Obviously, we have \textit{MRNG} $\cap$ \textit{RNG} $\supset$ \textit{NNG} (if a node $q$ is the nearest neighbor of $p$, we have $lune_{pq} \cap S = \emptyset$). This is necessary for MRNG's monotonicity. Figure \ref{nndetour} shows an example of the non-monotonic path if we apply MRNG's edge selection strategy on some graph $G$ but do not guarantee \textit{NNG} $\subset$ $G$. The edges in Figure \ref{nndetour} satisfy the selection strategy of the MRNG except that $q$ is forced not to be linked to its nearest neighbor $t$. Because $t$ is the nearest neighbor of $q$, we have $\delta(q,r) > \delta(q,t)$. Because $qt$ is the shortest edge in triangle $qtr$ and $q,r$ is linked, then $rt$ must be the longest edge in triangle $qtr$ according to the edge selection strategy of MRNG. Thus, $r,t$ can not be linked and we can only reach $t$ through other nodes (like $s$). Similarly, only when $rt$ is the longest edge in triangle $rst$, edge $rs$ and $st$ can coexist in this graph. Therefore when we go from $p$ to $t$, we need a detour at least via nodes $r,s$. Because $\delta(q,r) > \delta(q,t)$, it is a non-monotonic path from $p$ to $t$. If we don't guarantee \textit{NNG} $\subset$ \textit{MRNG}, detours are unavoidable, which may be worse in practice. It's easy to verify that a similar detour problem will also appear if we perform RNG's edge selection strategy on $G$ but do not guarantee that \textit{NNG} $\subset$ \textit{G}.

\begin{figure}[t]
\begin{center}
\includegraphics[width=140pt]{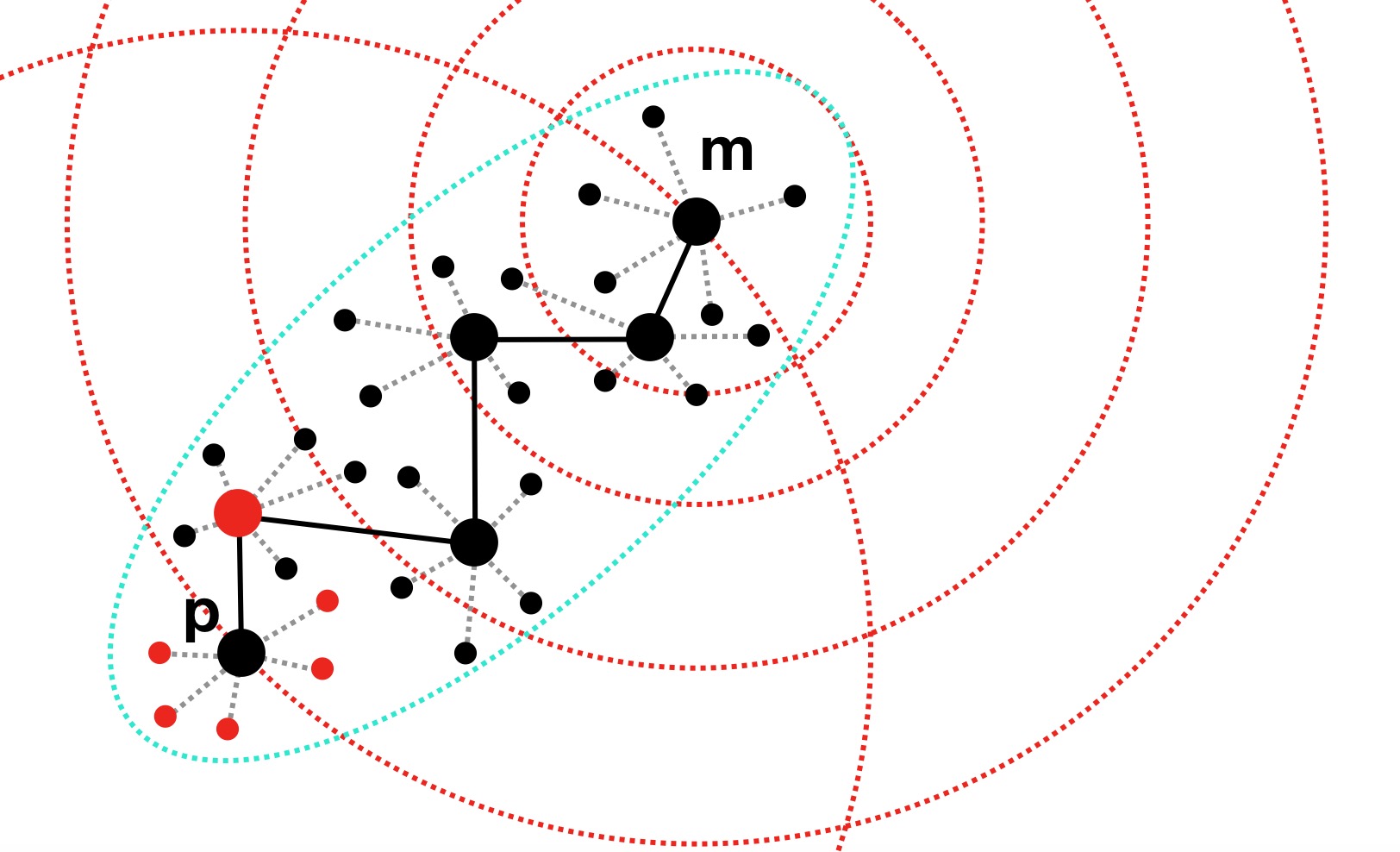}
\end{center}
   \caption{An illustration of the candidates of edge selection in NSG. Node $p$ is the node to be processed, and $m$ is the Navigating Node. The red nodes are the $k$ nearest neighbors of node $p$. The big black nodes and the solid lines form a possible monotonic path from $m$ to $p$, generated by the \textbf{search-and-collect} routine. The small black nodes are the nodes visited by the search-and-collect routine. All the nodes in the figure will be added to the candidate set of $p$.}
\label{candidate}
\end{figure}

Here we will discuss the average out-degree of the MRNG. The MRNG has more edges than the RNG, but it's still very sparse because the angle between any two edges sharing the same node is at least $60^\circ$ (by the definition of MRNG, for any two edges $pq,pr \in $ \textit{MRNG}, $qr$ must be the longest edge in triangle $pqr$ and $qr \notin$ \textit{MRNG}).
\begin{lemma}
\label{MRNGConstantDegree}
Given an MRNG in $E^d$, the max degree of the MRNG is a constant and independent of $n$.
\end{lemma}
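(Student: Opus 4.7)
The plan is to reduce the bound on the out-degree of any node in the MRNG to a packing problem on the unit sphere $S^{d-1}$, and then invoke a dimension-only bound (essentially the kissing number) to conclude independence from $n$.

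First I would prove the key angular-separation property: for any two outgoing edges $\overrightarrow{pq}$ and $\overrightarrow{pr}$ of the same node $p$ in an MRNG, the angle $\angle qpr \geq 60^{\circ}$. This uses only the recursive definition of MRNG. Assume without loss of generality $\delta(p,q) \leq \delta(p,r)$ and suppose for contradiction that $\delta(q,r) < \delta(p,r)$. Then $q \in B(p,\delta(p,r)) \cap B(r,\delta(p,r)) = lune_{pr}$, and $q \in S$. By Definition~\ref{MRNGDef}, since $\overrightarrow{pr} \in$ MRNG and $lune_{pr} \cap S \neq \emptyset$, we must have $\overrightarrow{pq} \notin$ MRNG, contradicting our assumption. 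Hence $\delta(q,r) \geq \delta(p,r) \geq \delta(p,q)$, so $qr$ is the longest side of triangle $pqr$. The angle opposite the longest side is the largest, and since the three interior angles of any triangle sum to $180^{\circ}$, the largest is at least $60^{\circ}$. Thus $\angle qpr \geq 60^{\circ}$.

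Next I would translate this into a packing statement. For each out-neighbor $q$ of $p$, consider the unit direction vector $u_q = (q-p)/\delta(p,q) \in S^{d-1}$. The angular-separation property gives $\langle u_q, u_r \rangle \leq \cos 60^{\circ} = 1/2$ for any two distinct out-neighbors $q,r$; equivalently, the Euclidean (chord) distance $\|u_q - u_r\| \geq 1$. Hence the out-neighbors of $p$ correspond to a set of points on $S^{d-1}$ with pairwise distances at least $1$. A standard volume/packing argument bounds the cardinality of any such set by a constant $C_d$ that depends only on the ambient dimension $d$ (this is precisely the kissing number in $E^d$, and one can also give a crude upper bound by placing disjoint spherical caps of angular radius $30^{\circ}$ around each $u_q$ and comparing their total area to that of $S^{d-1}$). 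Therefore the out-degree of $p$ is at most $C_d$, independent of $n$. Taking the maximum over all $p \in S$ yields the claimed constant bound on the maximum degree of the MRNG.

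The technical heart of the argument is really the first step, since it is where the recursive MRNG edge rule has to be unpacked carefully; the potential pitfall is the isosceles case where $\delta(p,q) = \delta(p,r)$, which is handled by the index-based tie-breaking convention stated after Definition~\ref{MRNGDef}, so the strict inequality chain still produces a contradiction. Once the $60^{\circ}$ property is in place, the sphere-packing bound is classical and invokes nothing about the data size $n$, so the constancy of the maximum out-degree follows immediately.
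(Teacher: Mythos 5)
Your proof is correct and takes essentially the same approach as the paper: both arguments derive the $60^\circ$ angular separation between any two out-edges at a node from the MRNG edge-selection rule (with the isosceles case settled by the index tie-break), and then bound the number of pairwise-$60^\circ$-separated directions by a constant depending only on $d$. The only cosmetic difference is the closing tool — the paper invokes Yao's covering of $E^d$ by convex cones of angular diameter below $60^\circ$ together with the pigeonhole principle, whereas you use a spherical-cap packing / kissing-number bound — and these are interchangeable.
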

\begin{proof}
Please see the detailed proof in the appendix.
\end{proof}

Now according to Lemma \ref{MRNGConstantDegree}, Theorem \ref{MSNET_find}, Theorem \ref{MSNETPathLength}, and Theorem \ref{MRNGProve}, we have that the search complexity in expectation on an MRNG is $O(cn^{\frac{1}{d}}\log n^{\frac{1}{d}}/\triangle{r})$, where $c$ is the average degree of the MRNG and independent of $n$, $\triangle{r}$ is a function of $n$, which decreases very slowly as $n$ increases. 

%\begin{figure}[t]
%\begin{center}
%\includegraphics[width=160pt]{theorem3.jpeg}
%\end{center}
%   \caption{An illustration of Theorem \ref{MRNGProve}. An MRNG is an MSNET.}
%\label{theorem3}
%\end{figure}

\subsection{MRNG Construction}
The MRNG can be constructed simply by applying our edge selection strategy on each node. Specifically, for each node $p$, we denote the set of rest nodes in $S$ as $R = S\backslash\{p\}$. We calculate the distance between each node in $R$ and $p$, then rank them in ascending order according to the distance. We denote the selected node set as $L$. We add the closest node in $R$ to $L$ to ensure \textit{NNG} $\subset$ \textit{MRNG}. Next, we fetch a node $q$ from $R$ and a node $r$ from $L$ in order to check whether $pq$ is the longest edge in triangle $pqr$. If $pq$ is not the longest edge in triangle $pqr, \forall r \in L$ , we add $q$ to $L$. We repeat this process until all the nodes in $R$ are checked. This naive construction runs in $O(n^2\log n + n^2c)$ time, where $c$ is the average out-degree of MRNG, which is much smaller than that of the MSNET indexing method proposed in \cite{dearholt1988monotonic}, which is at least $O(n^{2-\frac{2}{1+d} +\epsilon} + n^2\log n + n^3)$ under the general position assumption.

\begin{algorithm}[t]\small
	\caption{NSGbuild($G$, $l$, $m$)}
	\label{NSGbuild_alg}
	\begin{algorithmic}[1]
		\Require $k$NN Graph $G$, candidate pool size $l$ for greedy search, max-out-degree $m$.
		\Ensure NSG with navigating node $\textbf{n}$
		\State calculate the centroid $\textbf{c}$ of the dataset.
		\State $\textbf{r} = $random node.
		\State $\textbf{n} = $Search-on-Graph($G$,$\textbf{r}$,$\textbf{c}$,$l$)\%navigating node
		\ForAll{node $\textbf{v}$ in G}
		\State Search-on-Graph($G$,$\textbf{n}$,$\textbf{v}$,$l$)
		\State $E = $all the nodes checked along the search
		\State add $\textbf{v}$'s nearest neighbors in $G$ to $E$ 
		\State sort $E$ in the ascending order of the distance to $\textbf{v}$.
		\State result set $R = \emptyset$, $\textbf{p}_0 = $ the closest node to $\textbf{v}$ in $E$
		\State $R$.add($\textbf{p}_0$)
		\While{!$E$.empty() \&\& $R$.size() $< m$}
		\State $\textbf{p} = E$.front()
		\State $E$.remove($E$.front())
		\ForAll{node $\textbf{r}$ in $R$}
		\If{edge $\textbf{pv}$ conflicts with edge $\textbf{pr}$}
		\State break
		\EndIf
		\EndFor
		\If{no conflicts occurs}
		\State $R$.add($\textbf{p}$)
		\EndIf
		\EndWhile
		\EndFor
		\While{True}
		\State build a tree with edges in NSG from root \textbf{n} with DFS.
		\If{not all nodes linked to the tree}
		\State add an edge between one of the out-of-tree nodes and \State its closest in-tree neighbor (by algorithm 1).
		\Else
		\State break.
		\EndIf
		\EndWhile
	\end{algorithmic}
\end{algorithm}

\subsection{NSG: A Practical Approximation For MRNG}
\label{NSG}
Though MRNG can guarantee a fast search time, its high indexing time is still not practical for large-scale problems. In this section, we will present a practical approach by approximating our MRNG and starting from the four criteria to design a good graph for ANNS. We name it the \textbf{Navigating Spreading-out Graph} (NSG). We first present the NSG construction algorithm (Algorithm \ref{NSGbuild_alg}) as follows: 

\begin{enumerate}[i]
\setlength{\itemsep}{1pt}
\setlength{\parskip}{0pt}
\setlength{\parsep}{0pt}
\item We build an approximate $k$NN graph with the current state-of-the-art methods(\eg, \cite{johnson2017billion, Dong2011Efficient}).
\item We find the approximate medoid of the dataset. This can be achieved by the following steps. (1) Calculate the centroid of the dataset; (2) Treat the centroid as the query, search on the $k$NN graph with Algorithm \ref{search_alg}, and take the returned nearest neighbor as the approximate medoid. This node is named as the Navigating Node because all the search will start with this fixed node.
\item For each node, we generate a candidate neighbor set and select neighbors for it from the candidate sets. This can be achieved by the following steps. For a given node $p$, (1) we treat it as a query and perform Algorithm \ref{search_alg} starting from the Navigating Node on the prebuilt $k$NN graph. (2) During the search, each visited node $q$ (\ie , the distance between $p$ and $q$ is calculated) will be added to the candidate set (the distance is also recorded). (3) Select at most $m$ neighbors for $p$ from the candidate set with the edge selection strategy of MRNG. 
\item We span a Depth-First-Search tree on the graph produced in previous steps. We treat the Navigating Node as the root. When the DFS terminates, and there are nodes which are not linked to the tree, we link them to their approximate nearest neighbors (from Algorithm \ref{search_alg}) and continue the DFS. 
\end{enumerate}

What follows is the motivation of the NSG construction algorithm. The ultimate goal is to build an approximation of MRNG with low indexing time complexity.

\textbf{(i)} MRNG ensures there exists at least one monotonic path between any two nodes, however, it is not an easy task. Instead, we just pick one node out and try to guarantee the existence of monotonic paths from this node to all the others. We name this node as the Navigating Node. When we perform the search, we always start from the Navigating Node, which makes the search on an NSG almost as efficient as on an MRNG. 

\textbf{(ii)} The edge selection strategy of the MRNG treats all the other nodes as candidate neighbors of the current node, which causes a high time complexity. To speed up this process, we want to generate a small subset of candidates for each node. These candidates contain two parts: (1) As discussed above, the NNG is essential for monotonicity. Because it is very time-consuming to get the exact NNG, we turn to the approximate kNN graph. A high quality approximate kNN graph usually contains a high quality approximate NNG. It is acceptable when only a few nodes are not linked to their nearest neighbors. (2) Because the search on the NSG always starts from the Navigating Node $p_n$, for a given node $p$, we only need to consider those nodes which are on the search path from the $p_n$ to $p$. Therefore we treat $p$ as the query and perform Algorithm \ref{search_alg} on the prebuilt $k$NN graph. The nodes visited by the search and $p$'s nearest neighbor in the approximate NNG are recorded as candidates. The nodes forming the monotonic path from the Navigating Node to $p$ are very likely included in the candidates. When we perform MRNG's edge selection strategy on these candidates, it's very likely that the NSG inherits the monotonic path in the MRNG from the Navigating Node to $p$.

\textbf{(iii)} A possible problem in the above approach is the degree explosion problem for some nodes. Especially, the Navigating Node and nodes in dense areas will act as the ``traffic hubs'' and have high out-degrees. This problem is also discussed in HNSW \cite{MalkovYHNSW16}. They introduced a multi-layer graph structure to solve this problem, but their solution increased the memory usage significantly. Our solution is to limit the out-degrees of all the nodes to a small value $m \ll n$ by abandoning the longer edges. The consequence is the connectivity of the graph is no longer guaranteed due to the edge elimination. 

To address the connectivity problem, we introduce a new method based on the DFS spanning tree as described above. After this process, all the nodes are guaranteed at least one path spreading out from the Navigating Node. Though the proposed method will sacrifice some performance in the worst case, the detours in the NSG will be minimized if we build a high-quality approximate $k$NN graph and choose a proper degree limitation $m$. 

By approximating the MRNG, the NSG can inherit similar \textbf{low} search complexity as the MRNG. Meanwhile, the degree upper-bound makes the graph very \textbf{sparse}, and the tree-spanning operation guarantees the \textbf{connectivity} of the NSG.  The NSG's index contains only a sparse graph and no auxiliary structures. Our method has made progress on all the four aspects compared with previous works. These improvements are also verified in our experiments. The detailed results will be presented in the later sections. 

\subsubsection{Indexing Complexity of NSG}
The total indexing complexity of the NSG contains two parts, the complexity of the $k$NN graph construction and the preprocessing steps of NSG. In the million-scale experiments, we use the $nn$-descent algorithm \cite{Dong2011Efficient} to build the approximate $k$NN graph on CPU. In the DEEP100M experiments, we use Faiss \cite{johnson2017billion} to build it on GPU because the memory consumption of $nn$-descent explodes on large datasets. We focus on the complexity of Algorithm \ref{NSGbuild_alg} in this section.

The preprocess steps of NSG include the search-collect-select operation and the tree spanning. Because the $k$NN graph is an approximation of the Delaunay Graph (an MSNET), the search complexity on it is approximately $O(kn^{\frac{1}{d}}\log n^{\frac{1}{d}}/\triangle r)$. We search for all the nodes, so the total complexity is about $O(kn^{\frac{1+d}{d}}\log n^{\frac{1}{d}}/\triangle r)$. The complexity of the edge selection is $O(nlc)$, where $l$ is the number of the candidates generated by the search and $c$ is the maximal degree we set for the graph. Because $c$ and $l$ are usually very small in practice (\ie , $c\ll n, l\ll n$), this process is very fast. The final process is the tree spanning. This process is very fast because the number of the strongly connected components is usually much smaller than $n$. We only need to add a small number of edges to the graph. We can see that the most time-consuming part is the ``search-collect'' part. Therefore the total complexity of these processes is about $O(kn^{\frac{1+d}{d}}\log n^{\frac{1}{d}}/\triangle r)$, which is verified in our experimental evaluation in later sections. We also find that $\triangle r$ is almost a constant and does not influence the complexity in our experiments. 

In the implementation of this paper, the overall empirical indexing complexity of the NSG is $O(kn^{\frac{1+d}{d}}\log n^{\frac{1}{d}} + f(n))$ ($f(n) = n^{1.16}$ with $nn$-descent and $f(n)=n\log n$ with Faiss), which is much lower than $O(n^2\log n + cn^2)$ of the MRNG.

\subsection{Search On NSG}
We use Algorithm \ref{search_alg} for the search on the NSG, and we always start the search from the Navigating Node. Because the NSG is a carefully designed approximation of the MRNG, the search complexity on the NSG is approximately $O(cn^{\frac{1}{d}}\log n^{\frac{1}{d}}/\triangle r)$ on average, where $c$ is the maximal degree of the NSG, and $d$ is the dimension. In our experiments, $\triangle{r}$ is about $O(n^{-\frac{\epsilon}{d}})$, $0<\epsilon\ll d$. So the empirical average search complexity is $O(cn^{\frac{1+\epsilon}{d}}\log n^{\frac{1}{d}})$. Because $1+\epsilon \ll d$, the complexity is very close to $O(\log n)$, which is verified in our experimental evaluation in later sections. Our code has been released on GitHub\footnote{https://github.com/ZJULearning/nsg}.

\section{Experiments}
In this section, we provide a detailed analysis of extensive experiments on public and synthetic datasets to demonstrate the effectiveness of our approach.

\subsection{Million-Scale ANNS}
\subsubsection{Datasets}
Because not all the recent state-of-the-art algorithms can scale to billion-point datasets, this experiment is conducted on four million-scale datasets. SIFT1M and GIST1M are in the BIGANN datasets\footnote{http://corpus-texmex.irisa.fr/}, which are widely used in related literature \cite{jegou2011product, Ben2016Fanng}. RAND4M and GAUSS5M are two synthetic datasets. RAND4M and GAUSS5M are generated from the uniform distribution $U(0, 1)$ and Gaussian distribution $N(0, 3)$ respectively. Considering that the data may lie on a low dimensional manifold, we measure the local intrinsic dimension (LID) \cite{costa2005estimating} to reflect the datasets' degree of difficulty better. See Table \ref{dataset_info_tb} for more details.   

To prevent the indices from overfitting the query data, we repartition the datasets by randomly sampling one percent of the points out of each training set as a \textbf{validation set}. Since it's essential to be fast in the high-precision region (over 90\%) in real scenarios, we focus on the performance of all algorithms in the high-precision area. We tune their indices on the validation set to get the best performance in the high-precision region. 

\subsubsection{Compared Algorithms}

The algorithms we choose for comparison cover various types such as tree-based, hashing-based, quantization-based and graph-based approaches. The codes of most algorithms are available on GitHub and well optimized. For those who do not release their codes, we implement their algorithms according to their papers. They are implemented in C++, compiled by g++4.9 with ``O3'' option. The experiments of SIFT1M and GIST1M are carried out on a machine with i7-4790K CPU and 32GB memory. The experiments on RAND4M and GAUSS5M are carried out on a machine with Xeon E5-2630 CPU and 96GB memory. The indexing of NSG contains two steps, the $k$NN graph construction and Algorithm \ref{NSGbuild_alg}. We use the $nn$-descent algorithm \cite{Dong2011Efficient} to build the $k$NN graphs.

\begin{table}[t]\scriptsize
\caption{Information on experimental datasets. We list the dimension (D), local intrinsic dimension (LID) \cite{costa2005estimating}, the number of base vectors, and the number of query vectors.}
\label{dataset_info_tb}
\centering
%\begin{tabular}{|p{1cm}<{\centering}|p{1.2cm}<{\centering}|p{1.5cm}<{\centering}|p{1.5cm}<{\centering}|}
\begin{tabular}{ccccc}
\hline
\hline
dataset &  D & LID &No. of base &No. of query\\
\hline
SIFT1M &  128 & 12.9 & 1,000,000 & 10,000 \\
GIST1M &  960 & 29.1 & 1,000,000 & 1,000 \\
RAND4M & 128 & 49.5 & 4,000,000 & 10,000 \\
GAUSS5M & 128 & 48.1 & 5,000,000 & 10,000 \\
\hline
\hline
\end{tabular}
\end{table}
Because not all algorithms support \textbf{inner-query parallelizing}, for all the search experiments, we only evaluate the algorithms with a single thread. Given that all the compared algorithms have the parallel versions for their index building algorithms, for time-saving, we construct all the indices with eight threads. 
\begin{enumerate}
\setlength{\itemsep}{1pt}
\setlength{\parskip}{0pt}
\setlength{\parsep}{0pt}
\item \textbf{Serial Scan} We perform serial scan on the base data to get the accurate nearest neighbors for the test points. 
\item \textbf{Tree-Based Methods.} \textbf{Flann\footnote{https://github.com/mariusmuja/flann}} is a well-known ANNS library based on randomized KD-tree, K-means trees, and composite tree algorithm. We use its randomized KD-tree algorithm for comparison. \textbf{Annoy\footnote{https://github.com/spotify/annoy}} is based on a binary search forest.
\item \textbf{Hashing-Based Methods.} FALCONN\footnote{https://github.com/FALCONN-LIB/FALCONN} is a well-known ANNS library based on multi-probe locality sensitive hashing.

\item \textbf{Quantization-Based Methods.} \textbf{Faiss\footnote{https://github.com/facebookresearch/faiss}} is recently released by Facebook. It contains well-implemented codes for state-of-the-art product-quantization-based methods on both CPU and GPU. The CPU version is used here for a fair comparison.

\item \textbf{Graph-Based Methods.} \textbf{KGraph\footnote{https://github.com/aaalgo/kgraph}} is based on a $k$NN Graph. \textbf{Efanna\footnote{https://github.com/ZJULearning/efanna}} is based on a composite index of randomized KD-trees and a $k$NN graph. \textbf{FANNG} is based on a kind of graph structure proposed in \cite{Ben2016Fanng}. They did not release their code. Thus, we implement their algorithm according to their paper. \textbf{HNSW\footnote{https://github.com/searchivarius/nmslib}} is based on a hierarchical graph structure, which was proposed in \cite{MalkovYHNSW16}. \textbf{DPG\footnote{https://github.com/DBWangGroupUNSW/nns\_benchmark}} is based on an undirected graph whose edges are selected from a $k$NN graph. According to an open source benchmark\footnote{https://github.com/erikbern/ann-benchmarks}, \textbf{HNSW is the fastest ANNS algorithm on CPU so far.}

\item \textbf{NSG} is the method proposed in this paper. It contains only one graph with a navigating node where the search always starts. 
\item \textbf{NSG-Naive} is a designed baseline to demonstrate the necessity of NSG's search-collect-select operation and the guarantee of the graph connectivity. We directly perform the edge selection strategy of MRNG on the edges of the approximate $k$NN graph to get NSG-Naive. There is no navigating node, thus, we use Algorithm \ref{search_alg} with random initialization on NSG-Naive. 
\end{enumerate}

\subsubsection{Results}

\begin{table}[t]\scriptsize
\caption{Information of the graph-based indices involved in all of our experiments. AOD means the Average Out-Degree. MOD means the Maximum Out-Degree. The NN(\%) means the percentage of the nodes which are linked to their nearest neighbor. Because HNSW contains multiple graphs, we only report the AOD, MOD, and NN(\%) of its bottom-layer graph (HNSW$_0$) here.}
\label{best_index_tb}
\centering
\begin{tabular}{|p{1.2cm}<{\centering}|p{1.2cm}<{\centering}|p{1.5cm}<{\centering}|p{0.7cm}<{\centering}|p{0.6cm}<{\centering}|p{0.7cm}<{\centering}|}
%\begin{tabular}{|c|c|c|c|c|}
\hline
dataset & algorithms &memory(MB) & AOD & MOD & NN(\%) \\
\hline
\multirow{6}{*}{SIFT1M} 
& NSG & \textbf{153} &25.9& 50 & 99.3 \\
\cline{2-6}
&HNSW$_0$ & 451 &32.1& 50 & 66.3\\
\cline{2-6}
& FANNG & 374 &30.2& 98 & 60.4\\
\cline{2-6}
& Efanna & 1403 &300 &300 & 99.4 \\
\cline{2-6}
& KGraph & 1144&300&300 & 99.4\\
\cline{2-6}
& DPG & 632 & 165.1 & 1260 & 99.4 \\

\hline
\multirow{6}{*}{GIST1M} 
& NSG & \textbf{267} &26.3& 70 & 98.1\\
\cline{2-6}
& HNSW$_0$ & 667 &23.9 & 70 &47.5\\
\cline{2-6}
& FANNG & 1526 & 29.2&400 & 39.9\\
\cline{2-6}
& Efanna & 2154 &400 &400 &98.1\\
\cline{2-6}
& KGraph & 1526 &400& 400 &98.1\\
\cline{2-6}
& DPG & 741 & 194.3 & 20899 & 98.1\\
\hline

\multirow{6}{*}{RAND4M} 
& NSG & \textbf{2.7} $\times$ \textbf{10}$^\textbf{3}$ &174.0 & 220 & 96.4\\
\cline{2-6}
& HNSW$_0$ & $6.7 \times 10^3$ & 161.0 & 220 & 76.5\\
\cline{2-6}
& FANNG & $5.0 \times 10 ^3$ &181.2& 327  &66.7\\
\cline{2-6}
& Efanna & $6.3 \times 10^3$ & 400 &400  &96.6\\
\cline{2-6}
& KGraph & $6.1 \times 10^3$ &400 &400 &96.6\\
\cline{2-6}
& DPG & $4.7 \times 10^3$ & 246.4 & 5309 &96.6\\
\hline

\multirow{6}{*}{GAUSS5M} 
& NSG & \textbf{2.6} $\times$ $\textbf{10}^\textbf{3}$ &146.2 & 220 &94.3\\
\cline{2-6}
& HNSW$_0$ & $6.7 \times 10^3 $ &131.9&220 &57.6\\
\cline{2-6}
& FANNG & $5.2 \times 10^3 $ &152.2& 433 &53.4\\
\cline{2-6}
& Efanna & $7.8 \times 10^3 $ &400&400 &94.3\\
\cline{2-6}
& KGraph & $7.6 \times 10^3 $ &400&400 &94.3\\
\cline{2-6}
& DPG & $3.7 \times 10^3$ & 194.0& 15504 &94.3\\
\hline
\end{tabular}
\end{table}

\textbf{A. Non-Graph-Based v.s. Graph-Based}. We record the numbers of distance calculations of Flann (Randomized KD-trees), FALCONN(LSH), Faiss(IVFPQ), and NSG on SIFT1M and GIST1M to reach certain search precision. In our experiments, at the same precision, The other methods checks tens of times more points than NSG (see the figure in the appendix). This is the main reason of the big performance gap between graph-based methods and non-graph based methods.
\\

\textbf{B. Check Motivation}. In this paper, we aim to design a graph index with high ANNS performance from the following four aspects: (1) ensuring the connectivity of the graph, (2) lowering the average out-degree of the graph and (3) shortening the search path, and (4) reducing index size.

\begin{table}[t]\scriptsize
	\caption{The indexing time of all the graph-based methods. The indexing time of NSG is recorded in the form $t_1+t_2$, where $t_1$ is the time to build the $k$NN graph, and $t_2$ is the time of Algorithm \ref{NSGbuild_alg}.}
	\label{index_time}
	\centering
	\begin{tabular}{|p{1.2cm}<{\centering}|p{1.1cm}<{\centering}|p{1.5cm}<{\centering}|p{1.1cm}<{\centering}|p{1.3cm}<{\centering}|}
	%\begin{tabular}{|c|c|c|c|c|}
		\hline
		dataset &  algorithm & time(s) &algorithm & time(s)\\
		\hline
		\multirow{3}{*}{SIFT1M} 
		&NSG & \textbf{140+134} & HNSW & 376\\
		\cline{2-5}
		&FANNG & 1860 & DPG & 1120 \\
		\cline{2-5}
		&KGraph & 824 & Efanna & 355 \\
		\hline
		\multirow{3}{*}{GIST1M} 
		&NSG & \textbf{1982+2078} & HNSW & \textbf{4010}\\
		\cline{2-5}
		&FANNG & 34530 & DPG & 6700 \\
		\cline{2-5}
		&KGraph & 4300 & Efanna & 4335 \\
		\hline
		dataset &  algorithm & time(h) &algorithm & time(h)\\
		\hline
		\multirow{3}{*}{RAND4M} 
		&NSG & \textbf{2.1+2.5} & HNSW & 5.6\\
		\cline{2-5}
		&FANNG & 38.3 & DPG & 6.0 \\
		\cline{2-5}
		&KGraph & 4.9 & Efanna & 5.1 \\
		\hline
		\multirow{3}{*}{GAUSS5M} 
		&NSG &  \textbf{2.3+2.5} & HNSW & 6.7\\
		\cline{2-5}
		&FANNG & 46.1 & DPG & 6.4 \\
		\cline{2-5}
		&KGraph & 5.1 & Efanna & 5.3 \\
		\hline
	\end{tabular}
\end{table}

\begin{figure*}[t]
	\centering
	\subfigure{\includegraphics[width=410pt]{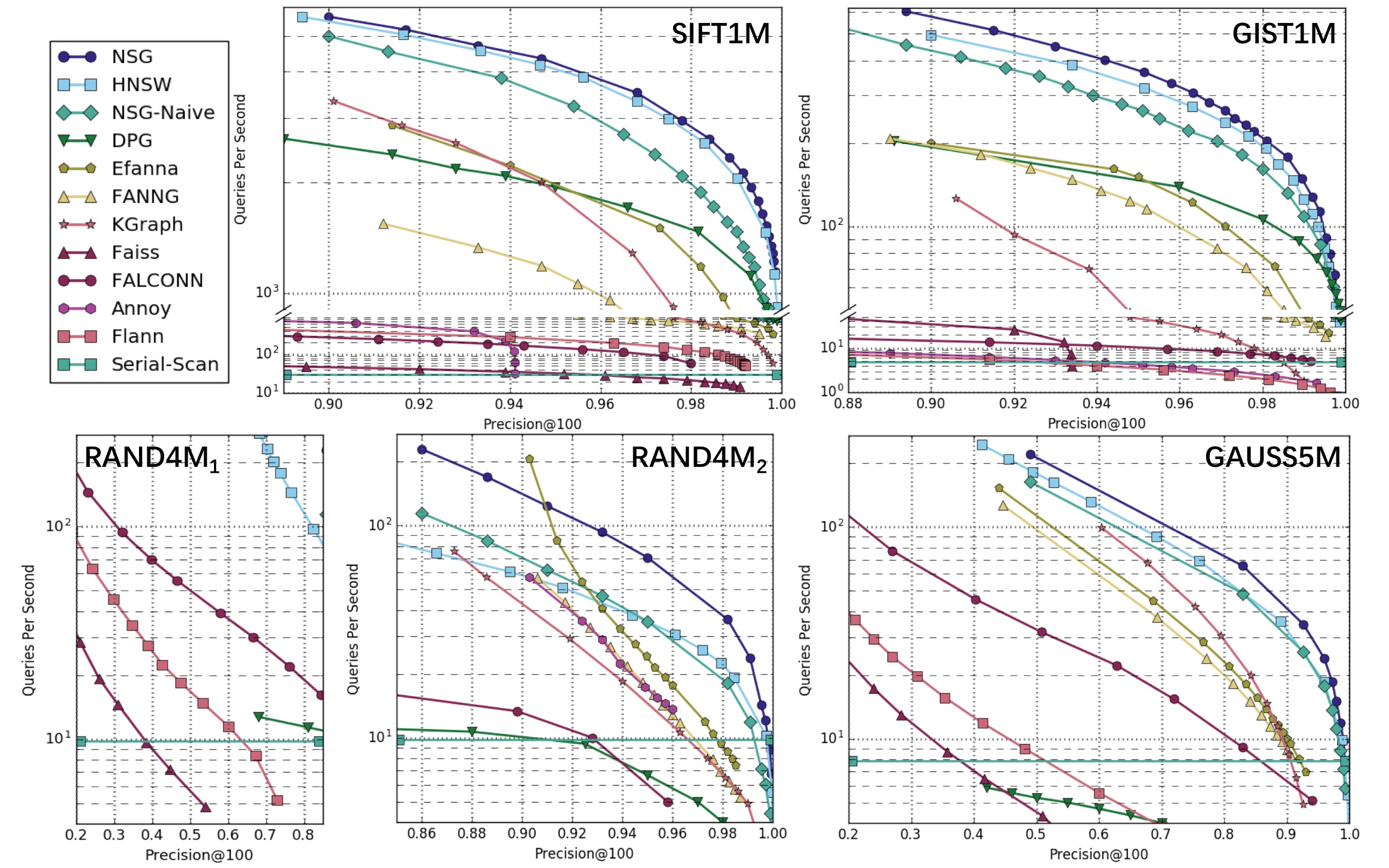}}
	\caption{ANNS performance of graph-based algorithms with their optimal indices in high-precision region on the four datasets (top right is better). Some of the non-graph based methods have much worse performance than the graph-based ones. So we break the y-axis of SIFT1M and GIST1M figures, and we break the x-axis of the RAND4M figure (RAND4M$_1$ and RAND4M$_2$) to provide better view of the curves. The x-axis is not applicable for Serial-Scan because the results are accurate. }
	\label{graph_search}
\end{figure*}
\begin{enumerate}
\setlength{\itemsep}{1pt}
\setlength{\parskip}{0pt}
\setlength{\parsep}{0pt}
\item \textbf{Graph connectivity.} Among the graph-based methods, NSG and HNSW start their search with fixed node. To ensure the connectivity, NSG and HNSW should guarantee the other points are reachable from the fixed starting node. The others should guarantee their graph to be strongly connected because the search could start from any node. In our experiments, we find that, except NSG and HNSW, the rest methods have more than one strongly connected components on some datasets. Only NSG and HNSW guarantee the connectivity over different datasets (see the table in the appendix).

\item \textbf{Lower the out-degree and Shorten the search path.} In \textbf{Table \ref{best_index_tb}}, we can see that NSG is a very sparse graph compared to other graph-based methods. Though the bottom layer of HNSW is sparse, it's much denser than NSG if we take the other layers into consideration. It's impossible to count the search path lengths for each method because the query points are not in the base data. Considering that all the graph-based method use the same search algorithm and most of the time is spent on distance calculations, the search performance can be a rough indicator of the term $ol$, where $o$ is the average out-degree and $l$ is the search path length. In \textbf{Figure \ref{graph_search}}, NSG outperforms the other graph-based methods on the four datasets. NSG has a lower $ol$ than other graph-based methods empirically. 

\item \textbf{Reduce the index size.} In \textbf{Table \ref{best_index_tb}}, NSG has the smallest indices on the four datasets. Especially, the index size of NSG is about 1/2-1/3 of the HNSW, which is the previous best performing algorithm\footnote{https://github.com/erikbern/ann-benchmarks}. It is important to note that, the memory occupations of NSG, HNSW, FANNG, Efanna's graph, and KGraph are all determined by the maximum out-degree. Although different nodes have different out-degrees, each node is allocated the same memory based on the maximum out-degree of the graphs to enable the continuous memory access (for better search performance). DPG cannot use this technique since its maximal out-degree is too large.

The small index of the NSG owes to approximating our MRNG and limit the maximal out-degree to a small value. The MRNG provides superior search complexity upper-bound for NSG. We have tried different auxiliary structures to replace the Navigating Node or use random starting node. The performance is not improved but gets worse sometimes. This means NSG approximates the MRNG well, and it does not need auxiliary structures for higher performance.
\end{enumerate}

\textbf{C. Some Interesting Points}:
\begin{enumerate}
\setlength{\itemsep}{1pt}
\setlength{\parskip}{0pt}
\setlength{\parsep}{0pt}
\item It's usually harder to search on datasets with higher local intrinsic dimension due to the ``curse of the dimensionality''. In \textbf{Figure \ref{graph_search}}, as the local intrinsic dimension increases, the performance gap between NSG and the other algorithms is widening.

\item When the required precision becomes high, the performance of many methods becomes even worse than that of the serial scan. NSG is tens of times faster than the serial scan at 99\% precision on SIFT1M and GIST1M. On RAND4M and GAUSS5M, all the algorithms have lower speed-up over the serial scan. NSG is still faster than the serial scan at 99\% precision.

\item The indexing of NSG is almost the fastest among graph-based methods but is much slower than the non-graph-based methods. Due to the space limit, we only list the preprocessing time of all the graph-based methods in \textbf{Table \ref{index_time}}. 

\item We count how many edges of the NNG are included in a given graph (\textbf{NN-percentage}) for all the compared graph-based methods, which are shown in \textbf{Table \ref{best_index_tb}}. We can see that HNSW and FANNG suffer from the same problem: a large proportion of edges between nearest neighbors are missing (\textbf{Table \ref{best_index_tb}}). It is because they initialize their graphs with random edges then refine the graphs iteratively. Ideally, they can link all the nearest neighbors when their indexing algorithms converge to optima, but they don't have any guarantee on its convergence, which will cause detour problems as we have discussed in Section \ref{MRNG}. This is one of the reasons that the search performance of FANNG is much worse than the NSG. Another reason is that FANNG is based on RNG, which is not monotonic. HNSW is the second best-performing algorithm because HNSW enables fast short-cuts via multi-layer graphs. However, it results in very large index size.

%\item Both HNSW and FANNG use RNG's edge pruning strategy to lower the degree of their graphs. Though they have similar out-degrees as NSG, their share inferior search performance because RNG's edge pruning strategy will produce non-monotonic graph and result in detour problems, as discussed in Section \ref{MRNG} and shown in \textbf{Fig \ref{rulecompare}}.

\item The difference between NSG-Naive and NSG is that NSG-Naive does not select Navigating Node and does not ensure the connectivity of the graph. Moreover, the probability to reserve the monotonic paths is smaller because its candidates for pruning only cover a small neighborhood. Though NSG-Naive uses the same edge selection strategy, the degree of the approximation is inferior to NSG, which leads to inferior performance. 

\item In the optimal index of KGraph and Efanna, the out-degrees are much larger than NSG. This is because the $k$NN graph used in KGraph and Efanna is an approximation of the Delaunay Graph. As discussed before, the Delaunay Graph is monotonic, which is almost fully connected on high dimensional datasets. When the $k$ of the $k$NN graph is sufficiently large, the monotonicity may be best approximated. However, the high degree damages the performance of KGraph and Efanna significantly.
\end{enumerate}

\subsubsection{Complexity And Parameters}
There are three parameters in the NSG indexing algorithm, $k$ for the $k$NN graph; $l$ and $m$ for Algorithm \ref{NSGbuild_alg}. In our experiments, we find that the optimal parameters will not change with the data scale. Therefore we tune the parameters by sampling a small subset from the base data and performing grid search for the optimal parameters.

We estimate the search and indexing complexity of the NSG on SIFT1M and GIST1M. Specifically, we first randomly sample subsets of different size from SIFT1M and GIST1M. Then we record the indexing time of NSG and the search time on the test set to reach 95\% precision for all these subsets. Next we draw the resulting curves (how the time increases with $N$) and use different functions to fit it. The estimated search complexity is about $O(n^{\frac{1}{d}}\log n^{\frac{1}{d}})$, and the complexity of Algorithm \ref{NSGbuild_alg} is about $O(n^{1+\frac{1}{d}}\log n^{\frac{1}{d}})$, where $d$ is approximately equal to the intrinsic dimension. This agrees with our theoretical analysis. We also estimate how the search complexity scales with $K$, the number of neighbors required. It's about $O(K^{0.46})$ or $O((logK)^{2.7})$.

Please see the appendix for the figures and the detailed analysis. 

\subsection{Search On DEEP100M}
The DEEP1B is a dataset containing one billion float vectors of 96 dimension released by Artem \etal \cite{babenko2016efficient}. We sample 100 million vectors from it and perform the experiments on a machine with i9-7980 CPU and 96GB memory. The dataset occupies 37 GB memory, which is the largest dataset that the NSG can process on this machine. We build the $k$NN graph with Faiss \cite{johnson2017billion} on four 1080Ti GPUs. The time to build the $k$NN graph is 6.75 hours and the time of Algorithm \ref{NSGbuild_alg} is 9.7 hours. The peak memory use of NSG at indexing stage is 92GB, and it is 55 GB at searching stage. We try to test HNSW, but it always triggers the Out-Of-Memory error no matter how we set the parameters. So we only compare NSG with Faiss. The result can be seen in \textbf{Figure \ref{deep1e}}.

\begin{figure}[t]
\begin{center}
\includegraphics[width=220pt]{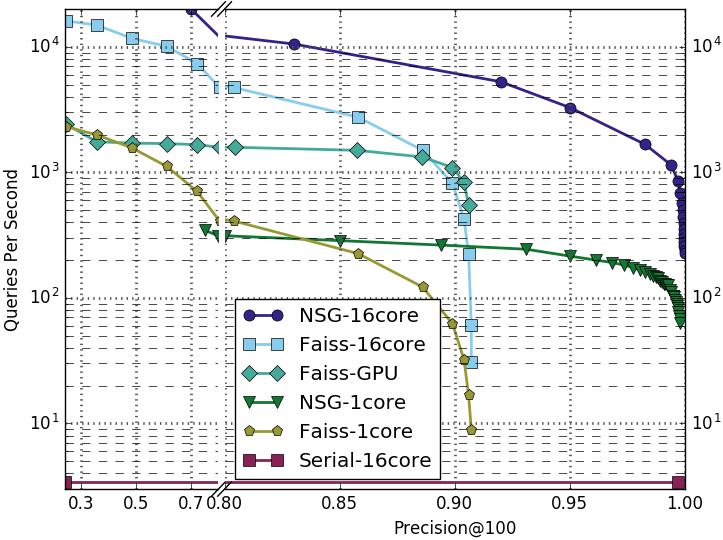}
\end{center}
   \caption{The ANNS performance of NSG and Faiss on the 100M subset of DEEP1B. Top right is better.}
\label{deep1e}
\end{figure}

\textbf{NSG-1core} means we build one NSG on the dataset and evaluate its performance with one CPU core. \textbf{NSG-16core} means we break the dataset into 16 subsets (6.25 million vectors each) randomly and build 16 NSG on these subsets respectively. In this way, we can enable inner-query parallel search for NSG by searching on 16 NSGs simultaneously and merging the results to get the final result. We build one Faiss index (IVFPQ) for the 100 million vectors and evaluate its performance with one CPU-core (\textbf{Faiss-1core}), 16 CPU-core (\textbf{Faiss-16 core}), and 1080Ti GPU (\textbf{Faiss-GPU}) respectively. Faiss supports inner-query parallel search. \textbf{Serial-16core} means we perform serial scan in parallel with 16 CPU cores. 

NSG outperforms Faiss significantly in high-precision region. Moreover, NSG-16core outperforms Faiss-GPU and is about 430 times faster than Serial-16core at 99\% precision. Meanwhile, building NSG on 6.25 million vectors takes 794 seconds. The total time of building 16 NSGs sequentially only spends 3.53 hours, which is much faster than building one NSG on the whole DEEP100M. The reason may be as follows. The complexity of Algorithm \ref{NSGbuild_alg} is about $O(n^{1+\frac{1}{d}}\log n^{\frac{1}{d}})$. Suppose we have a dataset $D$ with $n$ points. We can partition $D$ into $r$ subsets evenly. The time of building one NSG on $D$ is $t_1$. The time of building an NSG on one subset is $t_2$. It is easy to verify that we can have $t_1 > rt_2$ if we select a proper $r$. Consequently, sequentially indexing on subsets can be faster than indexing on the complete set.

\subsection{Search In E-commercial Scenario}
We have collaborated with Taobao on the billion-scale high-dimensional ANNS problem in the E-commercial scenario. The billion-scale data, daily updating, and response time limit are the main challenges. We evaluate NSG on the E-commercial data (128-dimension vectors of users and commodities) with different scales to work out a solution.

We compare NSG with the baseline (a well-optimized implementation of IVFPQ \cite{jegou2011product}) on the e-commerce database. We use a 10M dataset to test the performance on a single thread, and a 45M dataset to test the \textbf{Distributed Search} performance in a simulation environment. The simulation environment is a online scenario stress testing system based on MPI. We split the dataset and place the subsets on different machines. At search stage, we search each subset in parallel and merge the results to return. In our experiments, we randomly partition the dataset evenly into 12 subsets and build 12 NSGs. NSG is 5-10 times faster than the baseline at the same precision (See the appendix for details) and meet the response time requirement.

On the complete dataset (about 2 billion vectors), we find it impossible to build one NSG within one day. So we use the distributed search solution with 32 partitions. The average response time is about 5 ms at 98\% precision, and the indexing time is about 12 hours for a partition. The baseline cannot reach the response time requirement on the whole dataset.

\section{Discussions}
The NSG can achieve very high search performance at high precision, but it needs much more memory space and data-preprocessing time than many popular quantization-based and hashing-based methods (\eg, IVFPQ and LSH). The NSG is very suitable for high precision and fast response scenarios, given enough memory. In frequent updating scenarios, the indexing time is also important. Building one NSG on the large dataset is impractical, thus, the distributed search solution like our experiments may be a good choice. 

It's also possible for NSG to enable incremental indexing. We will leave this to future works. 

\section{Conclusions}
\label{sec5}
In this paper, we propose a new monotonic search network, MRNG, which ensures approximately logarithmic search complexity. We propose four aspects (ensuring the connectivity, lowering the average out-degree, shortening the search paths, and reducing the index size) to design better graph structure for massive problems. Based on the four aspects, we propose NSG, which is a practical approximation of the MRNG and considers the four aspects simultaneously. Extensive experiments show the NSG outperforms the other state-of-the-art algorithms significantly in different aspects. Moreover, the NSG outperforms the baseline method of Taobao (Alibaba Group) and has been integrated into their search engine for billion-scale search.

\section{Acknowledgements}
The authors would like to thank Tim Weninger and Cami G Carballo for their invaluable input in this work.

% if have a single appendix:
%\appendix[Proof of the Zonklar Equations]
% or
%\appendix  % for no appendix heading
% do not use \section anymore after \appendix, only \section*
% is possibly needed

% use appendices with more than one appendix
% then use \section to start each appendix
% you must declare a \section before using any
% \subsection or using \label (\appendices by itself
% starts a section numbered zero.)
%

% use section* for acknowledgment
%\ifCLASSOPTIONcompsoc
  % The Computer Society usually uses the plural form
 % \section*{Acknowledgments}
%\else
  % regular IEEE prefers the singular form
 % \section*{Acknowledgment}
%\fi

%The authors would like to thank...

%\end{document}  % This is where a 'short' article might terminate

% ensure same length columns on last page (might need two sub-sequent latex runs)

%ACKNOWLEDGMENTS are optional

% The following two commands are all you need in the
% initial runs of your .tex file to
% produce the bibliography for the citations in your paper.
\bibliographystyle{ACM-Reference-Format}
\bibliography{nsg}  % vldb_sample.bib is the name of the Bibliography in this case
% You must have a proper ".bib" file
%  and remember to run:
% latex bibtex latex latex
% to resolve all references

%APPENDIX is optional.
% ****************** APPENDIX **************************************
% Example of an appendix; typically would start on a new page
\clearpage
\begin{appendix}
\section{Proof For Theorem 1}
\begin{proof}
We can convert this problem into proving the following proposition. For any two points $p,q\in S$, we treat $q$ as the query and $p$ as the search starting point. For any $0<t<n$, after $t$ iterations of Algorithm \ref{search_alg} on the MSNET without back-tracing, the current subpath $v_1, ... , v_t, (v_1 = p)$ found by the algorithm is monotonic about $q$, which means $\forall{i}\in \{1,...,t-1\}, \delta(v_i, q) > \delta(v_{i+1}, q)$.

 If this proposition is true, then we can reach $q$ from $p$ after at most $n-1$ iterations of Algorithm \ref{search_alg}. Because after $n-1$ iterations, we will get a sequence $\{\delta(v_1,q),..., \delta(v_{n},q)\}$. Because the path $v_1, ... , v_{n}$ is monotonic about $q$, we have $\forall{i}\in \{1,...,n-1\}, \delta(v_i,q) > \delta(v_{i+1},q)$. If $\delta(v_{n},q) > 0$, we have $\forall{i}\in \{1,...,n\}, v_i \ne q$. Thus, we can conclude that $q \notin S$ and arrive at a contradiction. Therefore, we can reach $q$ in at most $n-1$ iterations. 
Now we will prove the new proposition with mathematical induction.

(1) Suppose $t=1$. There must exist at least one monotonic path between $v_1$ and $q$ in the MSNET. We select one monotonic path from $v_1$ to $q$, there must be a node $r$ on the monotonic path such that $r$ is the neighbor of $v_1$ and $\delta(v_1, q) > \delta(r, q)$. If $r = v_2$, then it's monotonic from $v_1$ to $v_2$ about $q$. Otherwise, because $v_2$ is found by Algorithm \ref{search_alg} without back-tracing, $v_2$ is the closest node to $q$ amongst $v_1$'s neighbors. Therefore, $\delta(v_2, q) \le \delta(r, q) < \delta(v_1,q)$. It's monotonic from $v_1$ to $v_2$ about $q$. 

(2) Suppose the proposition is true when $t=m$, the path $v_1, ... , v_{m}$ found by the algorithm without back-tracing is monotonic about $q$. When $t=m+1$, we select one monotonic path from $v_m$ to $q$, there must be a node $s$ on the monotonic path such that $s$ is the neighbor of $v_m$ and $\delta(v_m, q) > \delta(s, q)$. If $s = v_{m+1}$, then it's monotonic from $v_1$ to $v_{m+1}$ about $q$. Otherwise, because $v_{m+1}$ is found by Algorithm \ref{search_alg} without back-tracing, $v_{m+1}$ is the closest node to $q$ amongst $v_m$'s neighbors. Therefore, $\delta(v_{m+1}, q) \le \delta(s, q) < \delta(v_m,q)$. It's monotonic from $v_1$ to $v_{m+1}$ about $q$. 

Therefore, the new proposition is true, and we can find a monotonic path between any two nodes $p,q$ in $G$ with Algorithm \ref{search_alg} without back-tracing. 
\end{proof}

\section{Proof For Theorem 2}
\begin{proof}
(1) When $p$ and $q$ are linked in the MSNET, the path length between $p$ and $q$ is 1.

(2) When $p$ and $q$ are not linked in the MSNET, we can select a monotonic path in the MSNET, denoted as $p=v_1, v_2,...,v_{k-1}, q$. Thus, we have a monotonically decreasing sequence:
\begin{equation}
    Z = \{\delta(v_1,q), \delta(v_2,q),...,\delta(v_{k-1},q) \}
\end{equation}

Because $v_{k-1}$ may not be the nearest neighbor of $q$, we add a element $\delta(v_{k},q)$ to the end of the sequence, where $v_k$ is the nearest neighbor of $q$ and $\delta(v_{k-1},q) \ge \delta(v_k,q)$. The length of the new sequence still corresponds strictly to the length of the path (always longer by 1 for all possible paths), so it won't affect the correctness of the conclusion. We denote the new sequence as $Z = \{\delta(v_1,q), \delta(v_2,q),...,\delta(v_k,q) \}$.

We can use the elements in $Z$ to construct a sequence of concentric open spheres $\{B(q, \delta(v_1,q)), ... , B(q, \delta(v_{k}, q)) \}$. Let $V_B(B(q, \gamma))$ denote the volume of the sphere $B(q, \gamma)$.   Let
\begin{align*}
\eta_i &= \frac{V_B(B(q, \delta(v_{i+1},q)))}{ V_B(B(q, \delta(v_{i},q)))}, i \in \{1,..,k-2\}\\
\eta_i &= (\frac{\delta(v_{i+1},q)}{\delta(v_{i},q)})^d , i \in \{1,..,k-2\}
\end{align*}

Let $R$ be the maximal distance between any two points in $S$. We have $\forall{i} \in \{1,..,k\}, \delta(v_i,q) \le R$.

Suppose $a,b,c$ are any three points in $S$ such that triangle $abc$ is not an isosceles triangle. Let $\triangle{r}$=min$\{|\delta(a,b) - \delta(a,c)|, |\delta(a,b) - \delta(b,c)|, |\delta(a,c) - \delta(b,c)| \}$. We have $\forall{i} \in \{1,..,k-2\}$, $|\delta(v_i,q)-\delta(v_{i+1},q)| \ge \triangle{r}$.

Therefore, we have
\begin{align*}
\eta_i &= (\frac{\delta(v_{i+1},q)}{\delta(v_i,q)})^d \le (\frac{R-\triangle{r}}{R})^d, \forall{i} \in \{1,..,k-2\}\\
\eta_{k-1} &= (\frac{\delta(v_k,q)}{\delta(v_{k-1},q)})^d \le 1
\end{align*}

and 
\begin{align*}
V_{B_1} \le V_B(B(q, R)).
\end{align*}

Let $V_{B_i} = V_B(B(q, \delta(v_i,q)))$. We have
\begin{align*}
V_{B_{k}} &= V_{B_1} \frac{V_{B_2}}{V_{B_1}} ... \frac{V_{B_{k}}}{V_{B_{k-1}}} \\
                                            & = V_{B_1} \cdot \eta_1 ... ~ \eta_{k-1}
\end{align*}

Because $V_B(B(q, \delta(v_1,q))) \le V_B(B(q, R))$, we have
\begin{align*}
V_{B_{k}} &\le V_B(B(q, R)) \cdot \eta_1 ...~  \eta_{k-2}\eta_{k-1}\\
                                            &\le V_B(B(q, R)) \cdot ((\frac{R-\triangle{r}}{R})^d)^{k-2}\\
\frac{V_{B_{k}} }{V_B(B(q, R))}& \le ((\frac{R-\triangle{r}}{R})^d)^{k-2}
\end{align*}
Let $\hat{\eta} = (\frac{R-\triangle{r}}{R})^d$, we have
\begin{align*}
                                       k-2&\le  \log_{\hat{\eta}}\frac{V_{B_{k}}}{V_B(B(q, R))}\\
                                            & = \log_{\hat{\eta}}(\frac{\delta(v_k, q)}{R})^d\\
                                            & = d\log_{\hat{\eta}}\delta(v_k, q) - d\log_{\hat{\eta}}R
\end{align*}
We calculate the expectation of above inequality and have
\begin{align*}
\mathbb{E}(k) -2\le & d\mathbb{E}( \log_{\hat{\eta}}(\delta(v_k, q) )) - d\mathbb{E}(\log_{\hat{\eta}}( R) )
\end{align*}

Because $R$ and $ \triangle{r}$ are constants (independent of the choice of $p,q$) when the point set $S$ and the MSNET are determined, we have $\hat{\eta} = (\frac{R-\triangle{r}}{R})^d$ to be independent of the choice of $p,q$. Therefore, we have
\begin{align*}
\mathbb{E}(k) - 2\le & d\mathbb{E}( \log_{\hat{\eta}}(\delta(v_k, q) )) - d\mathbb{E}(\log_{\hat{\eta}}( R) )\\
		           =  & \frac{d\mathbb{E}(\log \delta(v_k, q))}{\log\hat{\eta}} - \frac{d\log R }{\log \hat{\eta}}
\end{align*}

Because $\log\mathbb{E}(x) \ge \mathbb{E}(\log x)$, we have
\begin{align*}
\mathbb{E}(k) - 2 \le & \frac{d\log \mathbb{E}(\delta(v_k, q))}{\log\hat{\eta}} - \frac{d\log R}{\log \hat{\eta}}\\
			   = & \frac{d\log \mathbb{E}(\delta(v_k, q)) - d\log R}{d\log (R - \triangle{r}) - d\log R}\\
			   = & \frac{\log \mathbb{E}(\delta(v_k, q)) - \log R}{\log (R - \triangle{r}) - \log R}
\end{align*}
Let
\begin{align*}
			f(R)  = & \frac{\log \mathbb{E}(\delta(v_k, q))) - \log R}{\log (R - \triangle{r}) - \log R}
\end{align*}

Because $\mathbb{E}(\delta(v_k, q)))$ and $\triangle{r}$ is independent of $R$, we can calculate the derivative of $f(R)$ about $R$ and get
\begin{align*}
			f'(R)   = & \frac{\frac{1}{R}(\log R - \log(R- \triangle{r})) }{(\log (R - \triangle{r}) - \log R) ^ 2}\\
				 + & \frac{ (\log R - \log(\mathbb{E}(\delta(v_k, q))))(\frac{1}{R - \triangle{r}} - \frac{1}{R})}{(\log (R - \triangle{r}) - \log R) ^ 2}
\end{align*}

By the definition of $\triangle{r}$ and $R$, we have $\triangle{r} < R$. Because $\delta(v_k, q)$ is the distance between $q$ and $v_k$, which is the nearest neighbor of $q$ in $S$, we have $\forall{q} \in S, R \ge \delta(v_k, q)$. Further, $R \ge \mathbb{E}(\delta(v_k, q))$. Therefore, we have $\forall{R}>0, f'(R) >0$. 

We define the density of the data points to be the number of the points in a unit volume. Because the data points are uniformly distributed in the space, the density $A$ of the data points is a constant. We have $n = A \cdot V_S$. Because $\kappa V_S\ge V_B(B(q, R))$, we have
\begin{align*}
 V_B(B(q, R)) & \le \kappa V_S\\
 V_B(B(q, R)) & \le \kappa \frac{n}{A}\\
 C_BR^d & \le \kappa \frac{n}{A}\\
 R & \le (\frac{\kappa n}{AC_B})^\frac{1}{d},
\end{align*}
where $C_B$ is a constant that is only related to $d$. For convenience, we merge the constant factors and rewrite above inequality as $ R  \le (\frac{n}{C})^\frac{1}{d}$. Because $\forall{R}>0, f'(R) >0$, we have $f(R) \le f((\frac{n}{C})^\frac{1}{d})$, \textit{i.e.}, we have
\begin{align*}
\mathbb{E}(k) - 2 &\le f(R) \le f((\frac{n}{C})^\frac{1}{d})\\
			   &= \frac{\log \mathbb{E}(\delta(v_k, q)) - \log (\frac{n}{C})^\frac{1}{d}}{\log ((\frac{n}{C})^\frac{1}{d} - \triangle{r}) - \log (\frac{n}{C})^\frac{1}{d}}
\end{align*}

Now we are going to evaluate growth rate of the right side of the above inequality about $n$. 

Because $\delta(v_k, q)$ is the distance between $q$ and its nearest neighbor. The expectation $\mathbb{E}(\delta(v_k, q))$ over all possible $q$ is only related to the density of the data points, thus, it's a constant which is independent of $n$. Let $l = \log \mathbb{E}(\delta(v_k, q))$ and $r(n) = (\frac{n}{C})^\frac{1}{d}$, then we have
\begin{align*}
\mathbb{E}(k) - 2 \le & \frac{l - \log r(n)}{\log (r(n) - \triangle{r}) - \log r(n)}\\
\end{align*}
Let
\begin{align*}
f_1(n) = & \frac{l - \log r(n)}{\log (r(n) - \triangle{r}) - \log r(n)}\\
f_2(n) = & \frac{r(n)\log r(n)}{\triangle{r}}
\end{align*}
We are going to prove that $f_1(n)$ and $f_2(n)$ are of the same order of growth. 

Suppose $S_n=\{p_1, ... ,p_n\}$. By the definition of $\triangle{r}$, we can find that, when $n$ increases by 1, there will be a new point $p_{n+1}$ added to $S$. And there will be $C_n^2$ triangles that can be formed by $p_i, p_j \in S_n$ and $p_{n+1}$. Amongst the $C_n^2$ triangles, it's possible to obtain a $\triangle{r}'$ which is smaller than the $\triangle{r}$ of the original point set $S_n$. Otherwise, the $\triangle{r}$ on the new set $S_{n+1}$ will remain unchanged. Thus, $\triangle{r}$ is a monotonically decreasing function of $n$, and when $n \rightarrow \infty$, $\triangle{r} \rightarrow 0$. Because $r(n) = (\frac{n}{C})^\frac{1}{d}$, we have $r(n) \rightarrow \infty$ when $n \rightarrow \infty$.

Because $n$ is discontinuous ($n \in \mathbb{N}^+$), for the convenience of the discussion of limit, we assume $n$ is a continuous variable. And we assume $r(n)$ and $\triangle{r}$ are continuously differentiable about $n$, which won't influence the correctness of the results. 

We have
\begin{align*}
\frac{f_1(n)}{f_2(n)} &= \frac{\frac{l - \log r(n)}{\log (r(n) - \triangle{r}) - \log r(n)}}{\frac{r(n)\log r(n)}{\triangle{r}}}\\
			      &= \frac{\frac{l}{\log r(n)}-1}{\frac{r(n)(\log (r(n) - \triangle{r}) - \log r(n))}{\triangle{r}}}
\end{align*}
Because when $n\rightarrow \infty$, we have 
\begin{align*}
\frac{l}{\log r(n)}-1 \rightarrow & -1 
\end{align*}
Let 
\begin{align*}
f_3(n) = \frac{r(n)(\log (r(n) - \triangle{r}) - \log r(n))}{\triangle{r}}
\end{align*}

Because $\underset{n\rightarrow \infty}{lim}f_3(n)$ is not obvious, we rewrite $f_3(n)$ as follows:
\begin{align*}
f_3(n) = \frac{\log (r(n) - \triangle{r}) - \log r(n)}{\frac{\triangle{r}}{r(n)}}
\end{align*}

Let
\begin{align*}
f_4(n) = &\log (r(n) - \triangle{r}) - \log r(n),\\
f_5(n) = &\frac{\triangle{r}}{r(n)}
\end{align*}
We have $\underset{n\rightarrow \infty}{lim}f_4(n) = 0$ and $\underset{n\rightarrow \infty}{lim}f_5(n) = 0$. According to L'H$\hat{o}$pital's rule, we have
\begin{align*}
\underset{n\rightarrow \infty}{lim}f_3(n)& =\underset{n\rightarrow \infty}{lim}\frac{f_4(n)}{f_5(n)} = \underset{n\rightarrow \infty}{lim}\frac{f_4'(n)}{f_5'(n)}\\
									       & =  \underset{n\rightarrow \infty}{lim}\frac{\frac{r'(n) - \triangle{r}'}{r(n) - \triangle{r}} - \frac{r'(n)}{r(n)}}{\frac{r(n)\triangle{r}' - r'(n)\triangle{r}}{r^2(n)}}\\
									       & = \underset{n\rightarrow \infty}{lim}\frac{\frac{r'(n)\triangle{r}-r(n)\triangle{r}'}{(r(n) - \triangle{r})r(n)}}{\frac{r(n)\triangle{r}' - r'(n)\triangle{r}}{r^2(n)}}\\
									       & = \underset{n\rightarrow \infty}{lim}\frac{-r^2(n)}{(r(n) - \triangle{r})r(n)}\\
									       & = \underset{n\rightarrow \infty}{lim}\frac{-1}{1 - \frac{\triangle{r}}{r(n)}}\\
									       & = -1
\end{align*}
Further, we have
\begin{align*}
\underset{n\rightarrow \infty}{lim}\frac{f_1(n)}{f_2(n)} &= \underset{n\rightarrow \infty}{lim}\frac{\frac{l}{\log r(n)}-1}{f_3(n)}\\
									        &= \frac{-1}{-1}\\
									        &= 1
\end{align*}

Therefore, we have proved that $f_1(n)$ and $f_2(n)$ are of the same order of growth. We have
\begin{align*}
\mathbb{E}(k) - 2 &= O(f_2(n))\\
			   &= O(\frac{n^{\frac{1}{d}}\log n^{\frac{1}{d}}}{\triangle{r}})
\end{align*}

We have the expectation length of a monotonic path in the given MSNET is $\mathbb{E}(length_{path}) = \mathbb{E}(k-1) = O(n^{\frac{1}{d}}\log n^{\frac{1}{d}}/\triangle{r})$.

\end{proof}

\section{Proof For Theorem 3}
\begin{proof}
Suppose $p,q \in S$ are any two points. If $\overset{\longrightarrow}{pq} \in$ \textit{MRNG}, then there is a monotonic path from $p$ to $q$ and we are done. Otherwise, according to the definition of the MRNG, there must exist a node $r$ such that $r \in lune_{pq}, \overset{\longrightarrow}{pr} \in$ \textit{MRNG}. Therefore we have $\delta(p,q) >\delta(r,q)$. Because for any two nodes $p,q$, there is at least one edge $\overset{\longrightarrow}{pr}$ such that $r \in B(q, \delta(p,q))$ in an MRNG, we can conclude that MRNG is an MSNET by Lemma 1.
\end{proof}

\section{Proof For Lemma 1}
\begin{proof}
(1)Sufficiency. If edge $\overset{\longrightarrow}{pq} \in G$, we have $\overset{\longrightarrow}{pq} \in B(q, \delta(p,q))$. If not, because $G$ is an MSNET, for any two nodes $p,q$, there exists a monotonic path from $p$ to $q$, denoted as $v_1,...,v_k, (k>2, v_1=p, v_k=q)$. We have $\delta(v_1,q) > \delta(v_2,q)$. Therefore, $v_2 \in B(q, \delta(v_1,q))$. Because $v_2$ is connected to $p(v_1)$, the proposition is true. 

(2)Necessity. If edge $\overset{\longrightarrow}{pq} \in G$, there is a monotonic path between $p,q$ with only one edge. Otherwise, we can prove that there exists at least one monotonic path from $p$ to $q$ with mathematical induction. This problem can be converted to that there must be a path $v_1, v_2, ... , v_k, (k\le n, v_1=p)$ such that the path $v_1, ... , v_k$ is monotonic about $q$, which means $\forall{i}\in \{1,...,k-1\}, \delta(v_i, q) > \delta(v_{i+1}, q)$.

When $k=2$, because there is at least one edge $\overset{\longrightarrow}{v_1v_2}$ such that $v_2 \in B(q, \delta(v_1,q))$, we have $\delta(v_2,q) < \delta(v_1,q)$.Thus, the path $v_1,v_2$ is monotonic about $q$. 

Suppose the proposition is true when $k=m, m < n$. When $k = m+1$, because there is at least one edge $\overset{\longrightarrow}{v_mv_{m+1}}$ such that $v_{m+1} \in B(v_m, \delta(v_m,q))$, we have $\delta(v_{m+1},q) < \delta(v_m,q)$. Therefore, when $k=m+1$, the path $v_1, ... , v_{m+1}$ is monotonic about $q$. 

Here we have proved that the path $v_1, .. , v_k, (p=v_1)$ is monotonic about $q$. Similar to the proof in Theorem \ref{MSNET_find}, we can finally reach $q$ for some $k\le n$.  Therefore, $G$ is an MSNET. 
\end{proof}

\section{Proof For Lemma 2}
\begin{proof}
Yao\cite{yao1982constructing} prove that, for any $0 < \varphi < \pi$, one can cover the space $E^d$ with finite convex cones sharing the same apex such that the angular diameter of each cone is smaller than $\varphi$. Let $\mathbb{C} = \{C_1, C_2, ..., C_k\}$ be such a set of convex cones constructed by the algorithm proposed by Yao. We have $sup\{\Theta(C_i)| C_i \in \mathbb{C}\}< \varphi$, where $\Theta(C)$ denotes the angular diameter of convex cone $C$. Let $u$ be the apex of all the cones and $ua$ be a ray. We define two convex cones $C_i, C_j$ to be \textit{adjacent} if $\exists{ua}, ua \in (C_i \cap C_j)$.

Let $\{uv_1, uv_2, ... ,uv_{k+1} \}$ be any $k+1$ rays with the same initial point $u$. Because the $k$ convex cones in $\mathbb{C}$ cover the whole $E^d$ space, according to the Pigeonhole Principle, we have that there are at least two rays are in the same cone. Let $uv$ and $uw$ be any two rays with the same initial point. Let $\angle{vuw}$ be the angle between $uv$ and $uw$. Because $sup\{\Theta(C_i)| C_i \in \mathbb{C}\} < \varphi$, we have, if $uv$ and $uw$ are inside of the same cone, then $\angle{vuw} \le \varphi$. Let $\varphi < 60^\circ$ and we construct a finite set of cones covering the whole $E^d$. Suppose the number of the cones is $K_d$. If we want the angle between any two rays sharing the same initial point to be no smaller than $60^\circ$, we have that the number of the rays must be smaller than $K_d+1$, where $K_d$ is only related to $d$.

Because $S$ is a finite point set, the direction to which the out-edges of each node pointing is discontinuous. Because the angles between any two out-edges of a given node is no smaller than $60^\circ$ in the MRNG, the max out-degree of each node is much smaller than $K_d+1$. Let $C_d$ be the maximal value of the average out-degree of any MRNG, we have $C_d < K_d+1$. $C_d$ is only related to the dimension $d$ and independent of $n$. 
\end{proof}

\section{Non-Graph-Based v.s. Graph-Based}
\begin{figure}[t]
\begin{center}
\includegraphics[width=240pt]{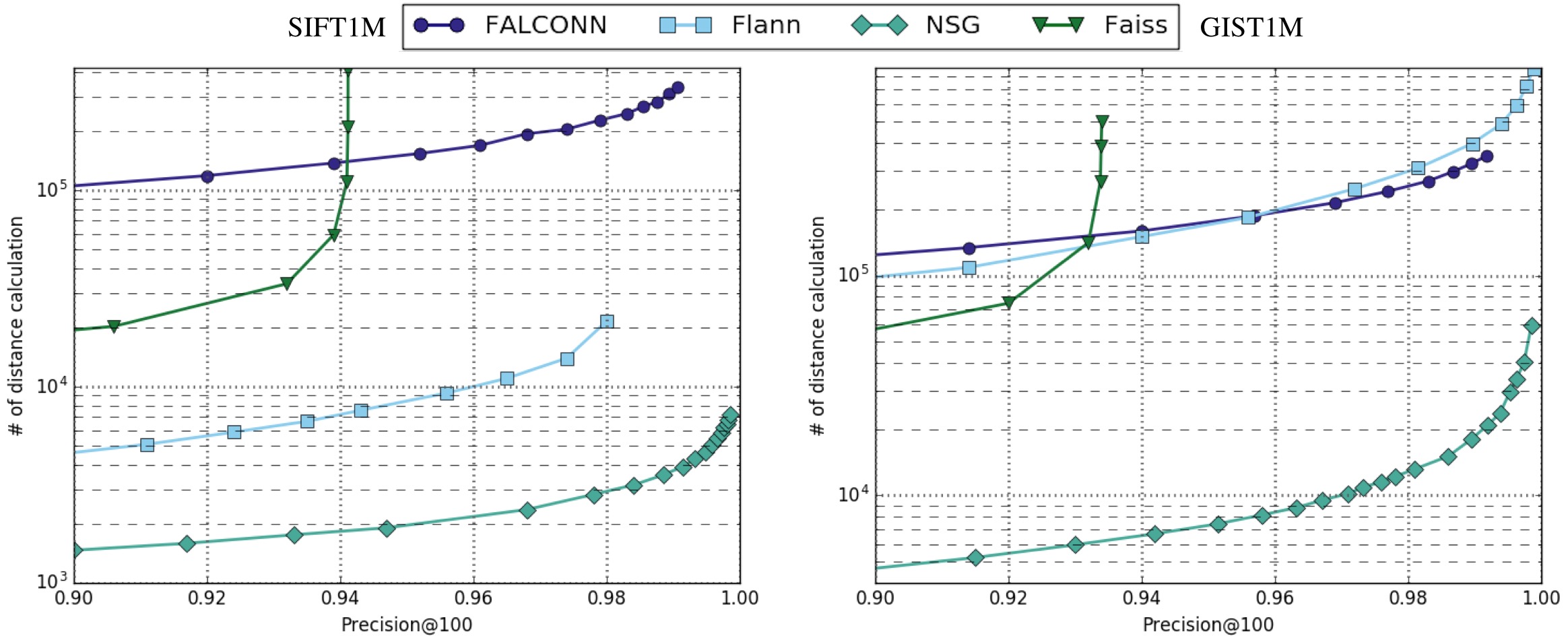}
\end{center}
   \caption{The numbers of distance calculations are recorded for the four algorithms. Bottom right is better}
\label{sift_gist_num}
\end{figure}
We choose some typical hashing-based method (LSH), tree-based method (Randomized KD-trees), quantization-based method (IVFPQ), and graph-based method (NSG). For the compared algorithms, we use FALCONN (LSH), Flann (Randomized KD-trees), and Faiss (IVFPQ) to get their performance. We count the number of distance calculations they use to reach a given precision and draw the curves in \textbf{Figure \ref{sift_gist_num}}. The results show that NSG needs tens of times less distance calculations than the others. This is the main reason of its high performance. 

\section{Experiments On The Connectivity}
\begin{table}[t]\scriptsize
	\caption{The number of strongly connected components (SCC) of all the graph-based methods. Because the search of the NSG and HNSW starts from fixed nodes, the SCC of them is recorded as 1 if all the other points are reachable from the search starting node.}
	\label{graph_scc}
	\centering
	\begin{tabular}{|p{1.2cm}<{\centering}|p{1.1cm}<{\centering}|p{1.5cm}<{\centering}|p{1.1cm}<{\centering}|p{1.3cm}<{\centering}|}
	%\begin{tabular}{|c|c|c|c|c|}
		\hline
		dataset &  algorithm & SCC amount &algorithm &SCC amount\\
		\hline
		\multirow{3}{*}{SIFT1M} 
		&NSG & 1 & HNSW & 1 \\
		\cline{2-5}
		&FANNG & 1 & DPG & 1 \\
		\cline{2-5}
		&KGraph & 1 & Efanna & 2 \\
		\hline
		\multirow{3}{*}{GIST1M} 
		&NSG & 1 & HNSW & 1\\
		\cline{2-5}
		&FANNG & 5 & DPG & 6 \\
		\cline{2-5}
		&KGraph & 5 & Efanna & 5 \\
		\hline
		\multirow{3}{*}{RAND4M} 
		&NSG & 1 & HNSW & 1\\
		\cline{2-5}
		&FANNG & 30 & DPG & 33 \\
		\cline{2-5}
		&KGraph & 27 & Efanna & 27 \\
		\hline
		\multirow{3}{*}{GAUSS5M} 
		&NSG & 1 & HNSW & 1\\
		\cline{2-5}
		&FANNG & 44 & DPG & 41 \\
		\cline{2-5}
		&KGraph & 36 & Efanna & 36 \\
		\hline
	\end{tabular}
\end{table}

The experimental results of graph connectivity are shown in \textbf{Table \ref{graph_scc}}. Algorithms like FANNG, KGraph, and DPG start their search from a randomly selected node. Efanna starts their search from a node given by searching in the randomized KD-trees. For these algorithms, the search starting node varies with the queries. To ensure the connectivity, they need to guarantee that there exists at least one path between any two pair of nodes, \ie, to guarantee their graph to be strongly connected. HNSW and our NSG start the search from one fixed node. So we just need to ensure there exists at least one path from this node to any other node. 

For these graph-based methods, we count the strongly connected components (SCC) in their graphs. For HNSW and NSG, we record their SCCs as one if the connectivity from the search starting node to all the other nodes is achieved. We find that only HNSW and NSG guarantee the connectivity of their graphs.

\section{Experiments On Taobao Data}

\begin{table}[t]\scriptsize
\caption{Results on the e-commerce dataset. E10M has 10 million vectors, E45M has 45 million vectors, E2B has 2 billion vectors. The dimension is 128. NT is the number of the threads. SQR98 means Single-Query-Response time to retrieve 100 neighbors at 98\% precision. IVFPQ is the quantization-based baseline. The baseline cannot reach the response time requirement on E2B, so we didn't report the time of it on E2B.}
\label{e-commerce}
\centering
%\begin{tabular}{|p{1cm}<{\centering}|p{1.2cm}<{\centering}|p{1.5cm}<{\centering}|p{1.5cm}<{\centering}|}
\begin{tabular}{|c|c|c|c|}
\hline
data set & algorithm  & NT & SQR98 (ms)\\
\hline
E10M &  NSG & 1 & 2.3 \\
E10M &  IVFPQ & 1 & 10 \\
E45M &  NSG & 12 & 1 \\
E45M &  IVFPQ & 12 & 10 \\
E2B &  NSG & 32 & 5 \\
\hline
\end{tabular}
\end{table}
The experimental results of NSG on the Taobao Data are shown in \textbf{Table \ref{e-commerce}}. In real scenario, we need to process huge amount of data. Meanwhile, the data is daily updated. We need to complete the indexing within a day. Therefore we test the NSG on different scale of the datasets to work out a solution. The original search engine is an implementation of IVFPQ. We choose it as the baseline. Though the baseline can reach the response time requirement on E10M and E45M, NSG is 5-10 times faster than it at the same precision in our experiments. Because the baseline cannot reach the response time requirement on the whole dataset, we cannot test it online and don't report its search time here. 

The main challenge of using NSG is the large indexing time. We turn to the distributed search, building multiple NSGs on different subsets and putting them on different machines. In our implementation, we use Faiss to build the $k$NN graph on GPUs and perform Algorithm 2 on CPUs. At search stage, we search on all the machines in parallel and merge the results. The average response time is about 5ms at 98\% precision. The indexing time of one partition is about 12 hours, which makes daily updating possible. 

\section{Experiments On Search And Indexing Complexity}
We estimate the search and indexing complexity of the NSG on SIFT1M and GIST1M to verify our theoretical analysis. NSG's search complexity on SIFT1M is $O(n^{\frac{1}{9.3}}\log n^{\frac{1}{9.3}})$ for both 1-NN (\textbf{Figure \ref{1nn_search}}) and 100-NN search (\textbf{Figure \ref{100nn_search}}). NSG's search complexity on GIST1M is $O(n^{\frac{2}{18.9}}\log n^{\frac{1}{18.9}})$ for both 1-NN (\textbf{Figure \ref{1nn_search}}) and 100-NN search (\textbf{Figure \ref{100nn_search}}). 9.3 and 18.9 is close to the local intrinsic dimension of SIFT1M and GIST1M respectively (in Table 2 of the paper). The complexity of the search on the MRNG is $O(cn^{\frac{1}{d}}\log n^{\frac{1}{d}}/\triangle{r})$, where $c$ is the average degree of the MRNG. The search complexity of the NSG is quite close to that of MRNG.  

According to the results, we find that the $\triangle{r}$ on SIFT1M is almost a constant and has little impact on the search complexity. The $\triangle{r}$ on GIST1M is about $O(n^{-\frac{1}{18.9}})$. This may be mainly due to the difference in their data distribution. The numerical values in each dimension of SIFT vectors are integers ranging from 0 to 255, while the numerical values in each dimension of GIST vectors are real numbers ranging from 0 to 1.5. The density of the points in GIST1M is much larger than in SIFT1M. Nevertheless, $O(n^{-\frac{1}{18.9}})$ decreases very slow when $n$ increases on GIST1M. The search complexity increases very slowly with $n$. These results agree with our theoretical analysis, and we can see that NSG is a good approximation of the MRNG.

The indexing of the NSG includes the approximate $k$NN graph construction, the ``search-collect-select'' procedure and the tree spanning procedure. The approximate $k$NN graph construction method is replaceable when there is a more efficient approach. In this paper, we use the Faiss library and $nn$-descent algorithm. The empirical complexity of $nn$-descent is roughly $O(n^{1.14})$ \cite{Dong2011Efficient}. The complexity of Faiss is not given by the authors. The tree spanning procedure is usually very fast and is of about $O(n)$ complexity because it is simply a DFS on the graph and it is rare that a node is not connected to the tree unless the data points are highly clustered. Usually, we just scan the dataset one time when spanning the tree. The reconnecting operation is seldom needed. 

The ``search-collect-select'' is the most time-consuming part. We first search on the prebuilt $k$NN graph, which is an approximation of the Delaunay Graph. The Delaunay Graph is an MSNET. The search on the $k$NN graph can be estimated approximately through the search complexity on an MSNET. Because we search for all the nodes in the dataset and select among the nodes on the search path, the complexity of the ``search-collect-select'' should be about $O(\epsilon n^{1+\frac{1}{d}}\log n^{\frac{1}{d}})$ in total, where $\epsilon$ is some constant.

The experimental results are in \textbf{Figure \ref{index_complexity}}. We can see that $O(\epsilon n^{1+\frac{1}{d}}\log n^{\frac{1}{d}})$ fit the indexing time curves well on both datasets, which agrees with our theoretical analysis. 

We also estimate how the search complexity scales with $K$, the number of neighbors required. Because there is no theoretical analysis on how the search complexity may scale with $K$. We try to fit the figure with $O(K^x)$ and $O(\log^x K)$ and to find out the value of $x$. We find that $O(K^{0.46})$ or $O((logK)^{2.7})$ all fit the curves well (\textbf{Figure \ref{knn_search}}). The difference is the constant factors hidden by the big $O$ notation. 

\section{Parameter Tuning}
In our experimental study, there exists an optimal parameter setting for a given dataset. Interestingly, the setting doesn't change much with the data scale. The reason may be that the optimal parameters of NSG are determined by the data distribution. The distribution of a randomly sampled subset is very similar to the whole dataset. For time-saving, we can randomly sample a subset from the large dataset and test different parameter combinations in a grid-search manner. Specifically, in the parameter space, we set a search range for each dimension (parameter). The value of that dimension is tested from low to high with a preset step size. We will try all possible parameter combinations within the given ranges in a brute-force manner. And we will choose the parameter combination which produces the best search performance.

\clearpage

\begin{figure*}[t]
	\centering
	\subfigure[SIFT1M]{\includegraphics[width=200pt]{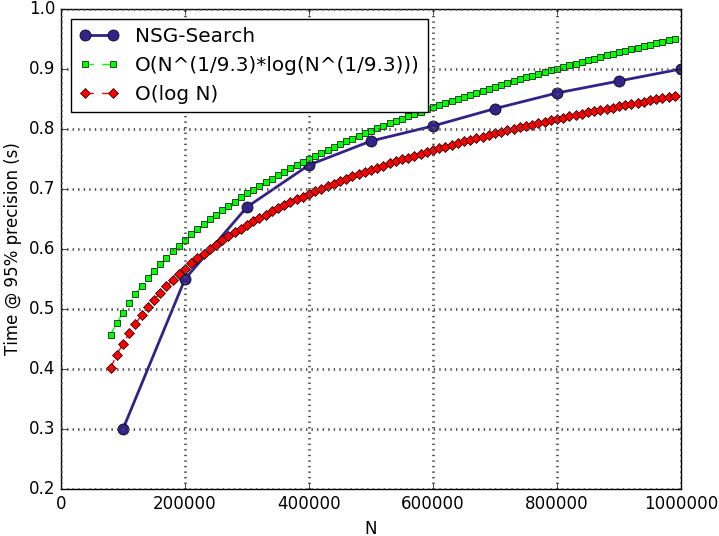}}
	\subfigure[GIST1M]{\includegraphics[width=200pt]{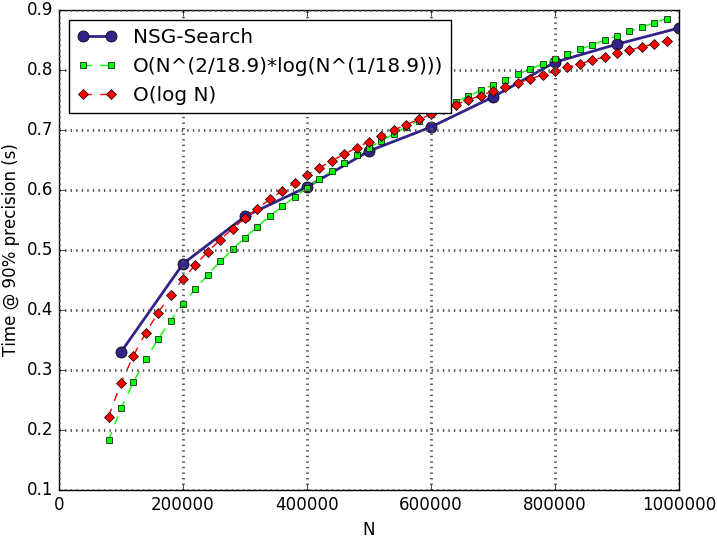}}
	\caption{The experiments of how the 1-NN search time scales with the data size on SIFT1M and GIST1M. The search time is recorded at 95\% precision and 90\% precision on them respectively. The complexity of 1-NN search on NSG is about $O(N^{\frac{1}{9.3}}\log N^{\frac{1}{9.3}})$ on SIFT1M, and is a little higher than $O(\log N)$. 9.3 is close to the intrinsic dimension of SIFT1M. The complexity of 1-NN search on NSG is about $O(N^{\frac{2}{18.9}}\log N^{\frac{1}{18.9}})$ on GIST1M, and is also a little higher than $O(\log N)$. 18.9 is close to the intrinsic dimension of GIST1M.}
	\label{1nn_search}
\end{figure*}

\begin{figure*}[t]
	\centering
	\subfigure[SIFT1M]{\includegraphics[width=200pt]{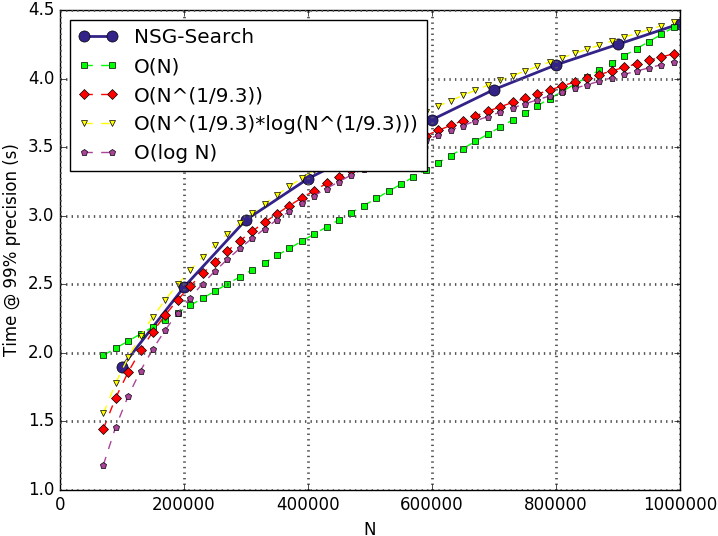}}
	\subfigure[GIST1M]{\includegraphics[width=200pt]{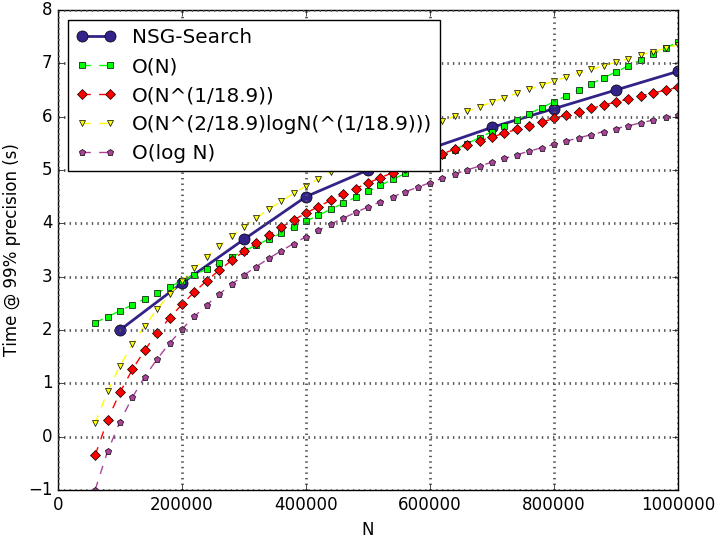}}
	\caption{The experiments of how the 100-NN search time scales with the data size on SIFT1M and GIST1M. The search time is recorded at 99\% precision. The complexity of 100-NN search on NSG is about $O(N^{\frac{1}{9.3}}\log N^{\frac{1}{9.3}})$ on SIFT1M, and is a little higher than $O(\log N)$ and $O(N^{\frac{1}{9.3}})$. 9.3 is close to the intrinsic dimension of SIFT1M. The complexity of 100-NN search on NSG is about $O(N^{\frac{2}{18.9}}\log N^{\frac{1}{18.9}})$ on GIST1M, and is a little higher than $O(\log N)$ and $O(N^{\frac{1}{18.9}})$. 18.9 is close to the intrinsic dimension of GIST1M.}
	\label{100nn_search}
\end{figure*}

\begin{figure*}[t]
	\centering
	\subfigure[SIFT1M]{\includegraphics[width=200pt]{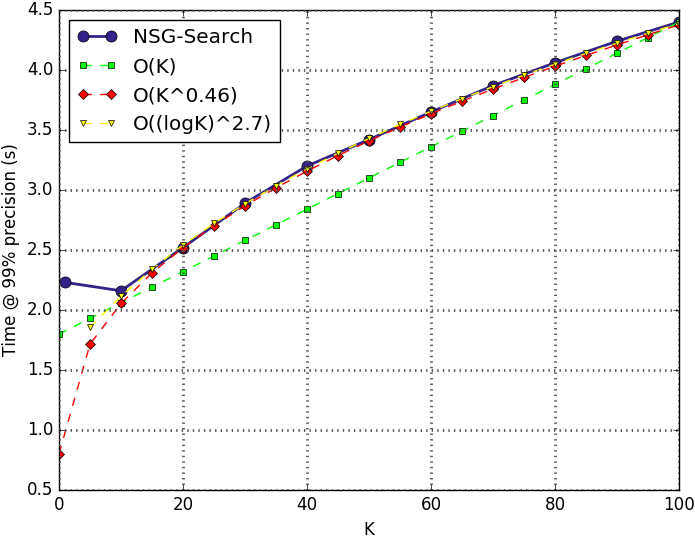}}
	\subfigure[GIST1M]{\includegraphics[width=200pt]{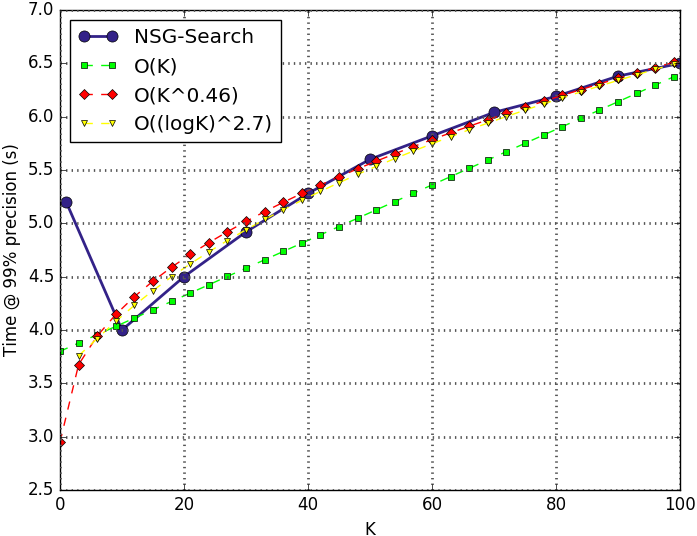}}
	\caption{The experiments of how the K-NN search time scales with K on SIFT1M and GIST1M. The search time is recorded at 99\% precision. The complexity of K-NN search on NSG is about $O(K^{0.46})$ or $O(\log^{2.7} K)$ on both datasets.}
	\label{knn_search}
\end{figure*}

\begin{figure*}[t]
	\centering
	\subfigure[SIFT1M]{\includegraphics[width=200pt]{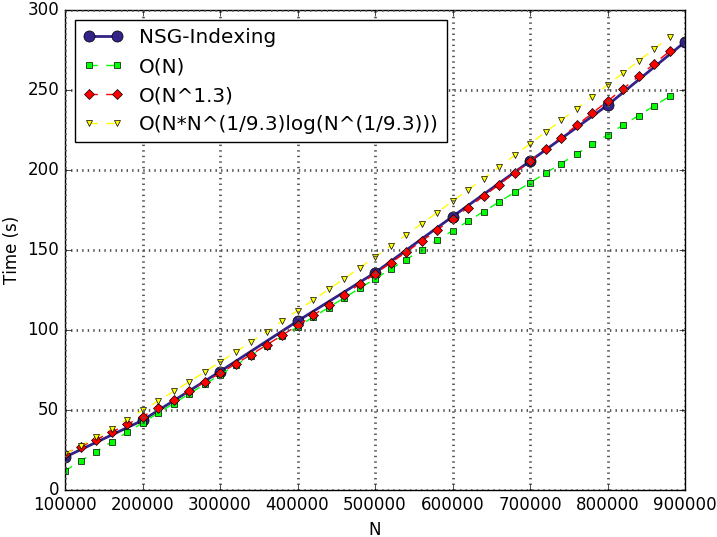}}
	\subfigure[GIST1M]{\includegraphics[width=200pt]{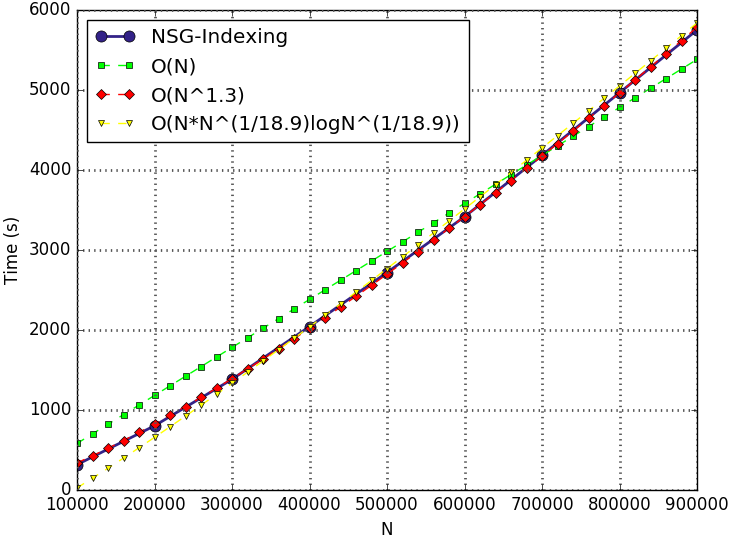}}
	\caption{Indexing time complexity estimation of the NSG on SIFT1M and GIST1M datasets. $N$ is the amount of the data points. The reported time is the processing time of the later steps of the NSG, including the ``search-collect-select'' procedure and the tree spanning procedure. The indexing complexity of NSG is about $O(N^{1+\frac{1}{9.3}}\log N^{\frac{1}{9.3}})$ on SIFT1M, and is close to $O(N^{1.3})$. 9.3 is close to the intrinsic dimension of SIFT1M. The indexing complexity of NSG is about $O(N^{1+\frac{1}{18.9}}\log N^{\frac{1}{18.9}})$ on GIST1M, and is also close to $O(N^{1.3})$. 18.9 is close to the intrinsic dimension of GIST1M.}
	\label{index_complexity}
\end{figure*}

\end{appendix}

\end{document}